\newcommand{\cmark}{\ding{51}}%
\newcommand{\xmark}{\ding{55}}%
\newtheorem{definition}{Definition}
\newtheorem{theorem}{Theorem}
\lstdefinelanguage{pddl}
{
    keywords={
        define,
    },
    keywordstyle=[2]{\color{red}},
    morekeywords={[2]
        domain,
        problem,
        requirements,
        predicates,
        types,
        action,
        init,
        goal,
        typing,
        costs,
        functions,
    },
    keywordstyle=[3]{\color{blue}},
    morekeywords={[3]
        parameters,
        vars,
        precondition,
        effect,
        duration,
    },
    keywordstyle={[4]\color{olive}},
    morekeywords={[4]
        forall,
        or,
        and,
        not,
        when,
        =,
        exists,
        at start,
        at end,
        over all,
        imply,
        oneof,
        increase,
    },
    comment=[l]{\;},
    sensitive=true
}[keywords, comments]
\lstdefinelanguage[mapl]{pddl}[]{pddl}
{
    morekeywords={[2]
      sensor,
    },
    morekeywords={[3]
      sense,
      replan
    },
    morekeywords={[4]
      KIF,
    },
}
\newcommand{\pddl}[1]{\text{\lstinline[basicstyle=\footnotesize\ttfamily]{#1}}}
\begin{document}
% The file aaai.sty is the style file for AAAI Press 
% proceedings, working notes, and technical reports.
%
\title{Fairness in Multi-Agent Planning}
\author{Alberto Pozanco, Daniel Borrajo\thanks{On leave from Universidad Carlos III de Madrid}
}
\affiliations{
J.P. Morgan AI Research \\
\{alberto.pozancolancho, daniel.borrajo\}@jpmorgan.com
}
\maketitle

\begin{abstract}
  In cooperative Multi-Agent Planning (MAP), a set of goals has to be achieved by a set of agents.
Independently of whether they perform a pre-assignment of goals to agents or they directly search for a solution without any goal assignment, most previous works did not focus on a fair distribution/achievement of goals by agents.
This paper adapts well-known fairness schemes to MAP, and introduces two novel approaches to generate cost-aware fair plans.
The first one solves an optimization problem to pre-assign goals to agents, and then solves a centralized MAP task using that assignment.
The second one consists of a planning-based compilation that allows solving the joint problem of goal assignment and planning while taking into account the given fairness scheme.
Empirical results in several standard MAP benchmarks show that
these approaches outperform different baselines. They also show that there is no need to sacrifice much plan cost to generate fair plans.
\end{abstract}

\section{Introduction}
Cooperative multi-agent planning (MAP)~\cite{DBLP:journals/csur/TorrenoOKS17} focuses on planning tasks where a set of agents need to achieve a set of goals.
These scenarios can be analyzed and solved using either distributed approaches, where the challenges lie in the communication among agents~\cite{DBLP:conf/aips/MaliahBS17} and privacy preservation~\cite{DBLP:conf/ijcai/Brafman15,DBLP:conf/aips/StolbaFK19}; or  centralized approaches, where the focus tend to be more on how problems are factored~\cite{DBLP:conf/aips/CrosbyRP13}.

In this paper we focus on the latter, and specifically on the combined problem of goal allocation and planning for multiple agents, i.e., how labor is distributed across agents and how these agents achieve the goals.
Previous works already addressed this joint problem~\cite{DBLP:journals/corr/MaK16,DBLP:conf/aips/0001D22}, but they (1) focus on multi-agent path finding rather than domain-independent MAP; and (2) focus on optimizing only one metric (typically makespan or plan length) without studying the implications of the induced goal assignments. 
While plan quality and privacy have been usually analyzed in the context of MAP tasks, fairness has not yet explicitly studied in the MAP literature as a metric to be taken into account.

\begin{figure}[hbt]
    \centering
    \includegraphics[scale=0.65]{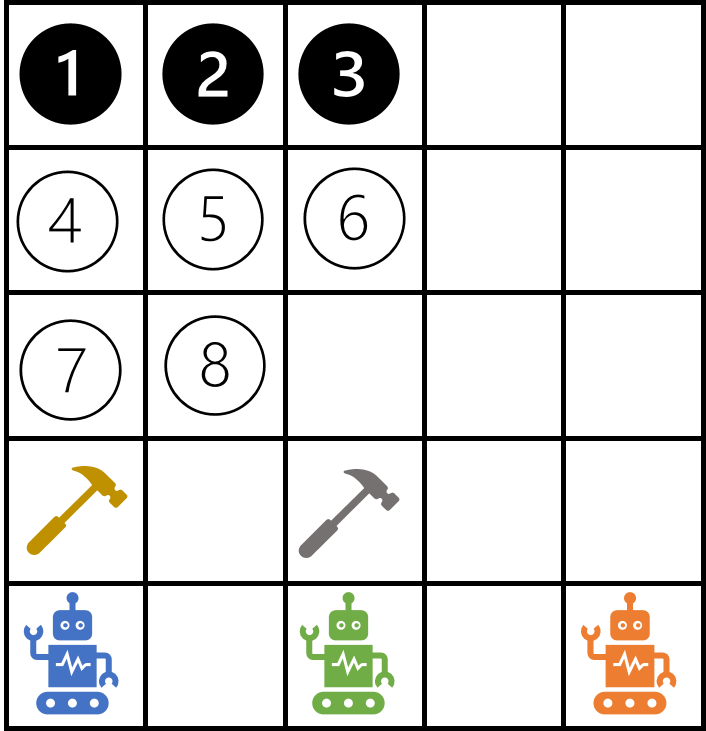}
    \caption{MAP task where three robots need to perform some works at eight different locations in a large warehouse.}
    \label{fig:fairness_example}
\end{figure}

Consider the MAP task depicted in Figure~\ref{fig:fairness_example}, where three robots controlled by a central entity (company) need to conduct some works at eight different locations in a large warehouse.
Robots can move over the grid (warehouse) in the four cardinal directions, pick up hammers and perform works. 
Moving consumes one energy unit and time, while picking up hammers and performing works do not consume energy or time, as the time needed to perform them is negligible compared to moving in the warehouse.
Robots need to pick up one of the two hammers in order to perform the works in black-filled locations (1, 2 and 3), while they do not need the hammer to perform works in the other locations.
If the company wants to minimize energy consumption, an optimal plan could involve the blue robot picking up the golden hammer and conducting the works in all locations in the following order: 7, 4, 1, 2, 3, 6, 5, and 8.
Such plan would have a total energy cost (sequential plan length) of $9$.
These are the type of plans returned by most domain-independent solvers that optimize sequential plan length~\cite{DBLP:journals/jair/Helmert06}.
The company might be also interested in finishing all the works as soon as possible.
In this case, an optimal plan could involve the blue robot picking up the golden hammer and performing works in locations 7, 8, 5, 4, and 1; and the green robot picking up the grey hammer and performing works in locations 6, 5, 2, and 3.
Such plan would allow the works to be completed in $6$ time steps (parallel plan length).
These are the type of plans returned by domain-independent solvers that optimize parallel plan length~\cite{rintanen2014madagascar}.
Note that in both plans the orange robot is not assigned to perform any work.
Even if these are optimal solutions under the different objectives, there exist a number of reasons why the company might not be interested in under-use of its resources. 
For example, in this case the company might be interested in balancing the workload to minimize robots' deterioration or maximize plan's robustness, i.e., maximize the number of goals achieved in case any of the robots fail.
Fair plans are even more important when humans are the planning task agents, since companies will be interested in balancing the workload to prevent employees' burnout; or will like to avoid employees' requests for explanations~\cite{DBLP:journals/corr/abs-1709-10256} so as to why peers with similar skill sets receive different number of goals/tasks.

A common solution to solve this problem is to divide the planning task by: (1) first assigning the goals to the agents in order to optimize a given metric, i.e., balance the load, minimize the cost etc.; and (2) then computing a plan where agents achieve their assigned goals.
However, by solving the planning task separately, we do not have any guarantee about the quality (in terms of plan cost or length) of the generated plans, or even about the existence of a plan that satisfies the assignment.
Moreover, most previous works do not reason about fair goal allocation explicitly: how different fairness notions generate different plans, and the trade-offs involved in having fair plans versus just optimizing the plan cost (length).
Fairness here is different from fairness as considered in Fully Observable Non-Deterministic (FOND)
 planning~\cite{DBLP:conf/aips/RodriguezBSG21}, where fairness is related to actions and the assumptions made about their non-deterministic effects.

The main contributions of this paper are:
\begin{enumerate}
    \item Adaptation of two well-known fairness definitions to domain-independent MAP.

    \item Two different approaches to solve a MAP task in a fair way: (1) a fairness-driven optimization method to pre-assign goals to agents; and (2) a planning-based compilation that allows solving the joint problem of goal assignment and planning while taking into account different fairness notions.
    
    \item Evaluation of the proposed
    approaches in domains from the last Competition of Distributed and Multi-Agent Planners (CoDMAP)~\cite{DBLP:conf/aaai/StolbaKK16}. We compare the results obtained by our techniques against several baselines in order to analyze the fairness vs optimality trade-off in these planning tasks.
\end{enumerate}

\section{Background}
In this section, we will cover the background on Automated planning, both in the single-agent and multi-agent settings and the standard language to represent planning tasks.

\subsection{Automated Planning}
Automated Planning (AP) is the task of choosing and organizing a sequence of actions such that, when applied in a given
initial state, it results in a goal state~\cite{DBLP:books/daglib/0014222}.
We use the first-order (lifted) planning formalism~\cite{DBLP:conf/aips/HollerB22}, where a \textit{single-agent classical planning task} is a tuple $\Pi = \langle \Psi, O, {\cal T}, \bar{A}, I, G \rangle$, where $\Psi$ is a set of \textit{predicates}; $O$ is a set of \textit{objects}; $\cal T$ is a set of \textit{object types}; $\bar{A}$ is a set of \textit{action schemas}; $I$ is the \textit{initial state} and $G$ is the \textit{goal state}.

Each predicate $P^n\in \Psi$ might have a tuple of parameters $P(v_1,\ldots,v_n)$ where $n$ is its arity.
%%
%We annotate predicates with their arity (here $n$) when the parameters are not given explicitly.
%%
%When relevant for a definition, w
%We annotate e
Each variable is associated with a type (e.g. $v_i^{t_i}$, $t_i\in\cal T$).
$\mbox{par}(P)$ is the set of parameters, $P[i]$ is the $i$-th parameter of $P$, and $P_s$ is the \textit{predicate symbol} of $P$, i.e., its name.

Each object has an associated type. Let $O^t$ be the subset of objects $O$ of type $t \in {\cal T}$.
We assume a type hierarchy that is a tree in which $O^{tp} \supseteq O^{tc}$ holds, for a parent type $tp$ and a child type $tc$, .
A parameter $v_i^{t_i}$ can be substituted by an object from $O^{t_i}$.

When all the variables of a predicate are substituted, we obtain a \textit{ground atom}.
We use the same notation as in the case of predicates to access ground atoms.
We denote the set of all ground atoms or propositions of a planning task as $F$.
A \textit{state}, such as $I\subseteq F$ or $G \subseteq F$, consists of a set of ground atoms, interpreted as the set of all true ground atoms in that state $s$.
We will use $\bar{s}$ to refer to the set of lifted predicates of the atoms in some list of ground atoms $s\subseteq F$.

Action schemas $\bar{a} \in \bar{A}$ are tuples $\bar{a} = \langle \mbox{name}(\bar{a}), \mbox{par}(\bar{a}), \mbox{pre}(\bar{a}), \mbox{eff}(\bar{a}), c(\bar{a}) \rangle$ defining the action name, parameters $\bar{a}(v_1^{t_1}, \ldots, v_n^{t_n})$, preconditions, effects, and cost, respectively.
Similarly to predicates, we access the $i$-th parameter of a lifted action as $\bar{a}[i]$.
We assume $\mbox{eff}(\bar{a}) = \{ \mbox{add}(\bar{a}) \cup \mbox{del}(\bar{a}) \cup \mbox{cond}(\bar{a}) \}$ where the first two sets are the sets of add and delete predicates, and the third one is a set of lifted conditional effects.
Each lifted conditional effect in $\mbox{cond}(\bar{a})$ is of the form $C \rhd E$ and is composed of two sets of predicates: $C \subseteq \Psi$, the condition, and $E \subseteq \Psi$, the effect, which is a conjunction of add and del effects.
Finally, we assume action schemas are well-defined, i.e., the type of the action's variables must be a subtype (mapping to the same or a subset of  objects) of the variable types of the predicates in the action's pre and eff.

Similar to predicates, a \textit{ground action} (or just \textit{action}) is obtained by substituting each of its parameter variables $v_i^{t_i}$ by an object in $O^{t_i}$.
We use $A$ to denote the set of all ground actions.
Each action $a \in A$ will be composed of the action name, ground preconditions (pre(a)), ground effects (add/del/cond(a)) and cost, c(a).

The execution of an action $a\in A$ in a state $s$ is defined by a function $\gamma$ such that $\gamma(s,a)=(s\setminus\mbox{del}(a) \cup \mbox{del}_c(s,a))\cup\mbox{add}(a) \cup \mbox{add}_c(s,a)$ if pre($a$)$\subseteq s$, and $s$ otherwise (it cannot be
applied).
The conditional add, $\mbox{add}_c(s,a)$, and delete, $\mbox{del}_c(s,a)$, effects are the effects whose conditions hold in $s$:
\[
\text{add}_c/\text{del}_c(s,a) = \bigcup_{C \rhd E \in cond(a), C \subseteq s, e\in add/del(E)} \{e\}
\]

The output of a planning task is a sequence of actions, called a plan, $\pi=(a_1,\ldots,a_n)$. The execution
of a plan $\pi$ in a state $s$ can be defined as:
\begin{small}
\[\Gamma(s,\pi)=\left\{\begin{array}{ll}
                        \Gamma(\gamma(s,a_1),(a_2,\ldots,a_n)) & \mbox{if } \pi\neq \emptyset\\
                        s & \mbox{if } \pi=\emptyset\\
                      \end{array}
                    \right.
\]
\end{small}

A plan $\pi$ is valid if $G\subseteq\Gamma(I,\pi)$. The plan cost is commonly defined as
$c(\pi)=\sum_{a_i\in\pi} c(a_i)$. 
A plan with minimal cost is called optimal.

\subsection{Multi-Agent Planning}
Multi-agent planning (MAP) aims at solving problems in which several agents act in the same environment. 
We consider centralized cooperative MAPs in which a centralized entity with all the information plans for all the agents~\cite{DBLP:journals/csur/TorrenoOKS17}.
\begin{definition}
  A \textbf{MAP task} is a tuple
  \[{\cal P} = \langle N, \Psi, O, {\cal T}, \bar{\cal A},I,G\rangle,\]
  where: $N = \{1,\ldots,n\}$ is a set of agents; $\Psi$ is a set of predicates; $O$ is a set of objects, which includes the agents in $N$; $\cal T$ is a set of types, which include a hierarchy of agents' types; $\bar{\cal A} = \bigcup_{\forall i \in N}\bar{A}_i$ is the total set of action schemas of $\cal P$; $I$ is an initial state; and $G$ is the common goal state.
\end{definition}

We assume actions are executed by only one agent, i.e., there is no joint action where more than one agent is needed to carry out an action;
actions $\bar{a} \in \bar{A_i}$ have a $v_1^{t_i}$ variable in the first position of their parameters, indicating agents of type $t_i$ can execute action $\bar{a}$.
We also assume each action $\bar{a}$ can achieve at most one goal predicate $\bar{g} \in \bar{G}$.

MAP tasks grounding, action and plan execution are equivalent to the single-agent classical planning case.
The solution to a MAP task in our setting is a sequential plan $\pi$ that specifies the actions agents in $N$ need to execute so that all the common goals $G$ are achieved.
We refer as $\pi_i$ to the subset of actions in the plan executed by agent $i \in N$.

\subsection{The Planning Domain Definition Language}
The planning community uses a standard language to specify planning tasks in a domain-independent fashion: the Planning Domain Definition Language (PDDL)~\cite{mcdermott1998pddl}. 
PDDL separates the task definition into two parts: domain and problem.
The domain describes the object types, predicates and actions that can be used in a domain to solve a task, while the problem defines the objects, initial state and goals of the specific task to be solved.
Listing~\ref{perform} shows the lifted representation in PDDL of the \pddl{perform-work-black-location} action in our warehouse domain.
As we can see, the action has three parameters (or variables), denoted with symbols starting with question marks: \pddl{?a}, of type agent; \pddl{?l}, of type black location; and \pddl{?h}, of type hammer.
In order to execute the action in a given state, two preconditions need to hold: agent \pddl{?a} needs to be at location \pddl{?l}, and the same agent \pddl{?a} needs to be holding a hammer \pddl{?h}.
The effect of executing an action is that the work at that location has been performed, i.e., the predicate \pddl{(work-performed ?l)} is now true.
\begin{center}
\begin{lstlisting}[language=pddl,frame=single,caption=PDDL representation of the \pddl{perform-work-black-location} action in the warehouse domain.,label=perform]
(:action perform-work-black-location
 :parameters (?a - agent ?l - black_location ?h - hammer)
 :precondition (and (at ?a ?l) (holding ?a ?h))
 :effect (work-performed ?l))
\end{lstlisting}
\end{center}

Lifted actions are grounded by substituting each of the parameters by objects of their type.
If we ground and execute this action with \pddl{blueRobot} as agent, \pddl{L1} as location, and \pddl{goldenHammer} as hammer, the grounded atom \pddl{(work-performed L1)} would be true.

\section{Plan Fairness}
In this section we define four different plan fairness notions $\cal F$, which are adaptations of the well-known Maximin and Proportional Equality fairness schemes~\cite{DBLP:journals/ior/BertsimasFT11,brandt2016handbook} to MAP. 

The first two plan fairness notions we define are related to the number of goals each agent \emph{achieves} once the plan is executed.
We say an agent achieves a goal when it is the first agent making that goal proposition true.
Formally:
\begin{definition}
Given a plan $\pi$ that solves a MAP task $\cal P$, an \textbf{agent} $i \in N$ \textbf{is the first achiever of a goal} $g \in G$ iff it is the first agent in making  $g$ true. We denote $g_i$ when agent $i$ is the first achiever of goal $g$.
\end{definition}

We now formally define two plan fairness schemes that are \emph{goal-oriented}, i.e., they reason about the number of goals achieved by each agent.
\begin{definition}
 A \textbf{plan} $\pi$ is \textbf{optimal} under the \textbf{Goal Maximin} fairness scheme iff its execution maximizes the minimum number of goals from $G$ first achieved by any agent:
 %$i\in N$:
    \begin{gather*}
        \max(\min_{i\in N} |g_i|)
    \end{gather*}
\end{definition}

\begin{definition}
 A \textbf{plan} $\pi$ is \textbf{optimal} under the \textbf{Goal Proportional Equality} fairness scheme iff its execution minimizes the difference between the agent $i\in N$ that first achieves more goals from $G$ and the agent $k \in N$ that first achieves less: %goals from $G$:
    \begin{gather*}
        \min (\max_N |g_i| - \min_N |g_k|)
    \end{gather*}
\end{definition}

We also define two plan fairness schemes that are \emph{workload-oriented}, i.e., they reason about the cost of the actions executed by each agent.

\begin{definition}
 A \textbf{plan} $\pi$ is \textbf{optimal} under the \textbf{Workload Maximin} fairness scheme iff its execution maximizes the minimum cost of the actions executed by any agent:
 %$i\in N$:
    \begin{gather*}
        \max(\min_{i\in N} c(\pi_i))
    \end{gather*}
\end{definition}

\begin{definition}
 A \textbf{plan} $\pi$ is \textbf{optimal} under the \textbf{Workload Proportional Equality} fairness scheme iff its execution minimizes the difference between the agent $i\in N$ with the most costly actions and the agent $k \in N$ with the least costly actions:
    \begin{gather*}
        \min (\max_N c(\pi_i) - \min_N c(\pi_k))
    \end{gather*}
\end{definition}

Let us exemplify these plan fairness notions using again the MAP task depicted in Figure~\ref{fig:fairness_example}.
A plan that would optimize the Goal Maximin fairness scheme could involve the blue robot performing the works in locations 1, 2, 4 and 7; the green robot performing the works in locations 5, and 8; and the orange robot performing the works in locations 3 and 6. 
This plan would however not be optimal under the Goal Proportional Equality plan fairness scheme.
In that case an optimal plan could involve the blue and green robots performing works in three locations, and the orange robot performing works in two locations. 
These plans are different not only between them, but also with respect to the standard definitions of sequential and parallel plan length we discussed in the introduction.

Discussing the workload version of the fairness schemes becomes trickier, as now we need to reason about the plans and not only about the goal allocation induced by a plan.
In this case, arbitrarily bad plans where agents perform irrelevant or loopy actions to increase their workload without contributing towards achieving any goal could optimize Workload Maximin or Workload Proportional Equality.
In the next section we discuss how we overcome this by generating cost-aware fair plans.

\section{Generating Cost-Aware Fair Plans}
As we have seen, arbitrarily bad plans could optimize the workload version of the plan fairness schemes.
Something similar could happen with their goal version, where the aimed goal distributions can be often accomplished by plans where agents might follow highly suboptimal paths to achieve their goals.
Hence, we will jointly optimize plan fairness and plan cost in a multi-objective way.

We present two different approaches to solve MAP tasks jointly considering plan fairness and cost. 
The first approach assigns goals to agents before planning, and then solves a MAP task with the chosen assignment. 
This approach can optimize the four different fairness schemes by slightly modifying the objective function that pre-assigns goals to agents.
The second approach compiles the MAP task in such a way that the resulting planning task considers both metrics while planning.
As discussed later, this approach can only optimize the two goal-oriented fairness schemes.

\subsection{Pre-assignment of Goals Based on Fairness}
This approach pre-assigns each goal to an agent, and then solves a centralized MAP task where goals have to be achieved by their assigned agents.
However, some goal propositions might be true in the initial state.
Since the plan might not need actions to re-achieve them, we do not assign those goals to any agent. In any case, if they have to be re-achieved by any agent, they will be considered for the plan fairness computation. 
Hence, we will just allocate assignable goals when performing goal pre-assignment.
\begin{definition}
A \textbf{goal} $g \in G$ \textbf{is assignable} iff $g \notin I$. We refer as $\hat{G} \subseteq G$ to the set of assignable goals.
\end{definition}
Let us consider a slight variation of our running example, where now robots can also drop hammers.
In this case as part of the goals, robots need to leave hammers in their original locations.
This is already true in the initial state, so we cannot assign these goals to any agent. 
However, an agent will achieve these goals eventually in order to generate a valid plan for the MAP task, and they will be considered for the plan fairness computation.

The first step of this algorithm is to solve a Mixed-Integer Linear Program (MILP) to assign assignable goals to agents.
The objective function to optimize will be: (1) when ${\cal F}=\mbox{Goal Maximin}$; (2) when ${\cal F}=\mbox{Goal Proportional Equality}$; (3) when ${\cal F}=\mbox{Workload Maximin}$; and (4) when ${\cal F}=\mbox{Workload Proportional Equality}$\footnote{Note that in objective functions 2 and 4 $\max-\min$ has been exchanged for $\min-\max$ to convert the minimization task into a maximization one.}:
    \begin{equation}
    \small
    \label{objective_function_maximin}
    \text{maximize}\quad \displaystyle {\cal K} \mbox{minG} - \!\!\!\!\sum_{(a,g) \in \mbox{achievable}(N,\hat{G})} \!\!\!\!\!\!\!\!\!\!\!\!\!x_{a,g} h_{\textsc{ff}}(\{a\},\{g\})
    \end{equation}
    \begin{equation}
    \small
    \label{objective_function_propeq}
    \text{maximize}\quad \displaystyle {\cal K} (\mbox{minG}-\mbox{maxG}) - \!\!\!\!\!\!\!\!\!\!\!\!\!\sum_{(a,g) \in \mbox{achievable}(N,\hat{G})} \!\!\!\!\!\!\!\!\!\!\!\!\!x_{a,g} h_{\textsc{ff}}(\{a\},\{g\})
    \end{equation}
    %%%
    %%%
    \begin{equation}
    \small
    \label{objective_function_maximin_G}
    \text{maximize}\quad \displaystyle {\cal K} \mbox{minW} - \!\!\!\!\sum_{(a,g) \in \mbox{achievable}(N,\hat{G})} \!\!\!\!\!\!\!\!\!\!\!\!\!x_{a,g} h_{\textsc{ff}}(\{a\},\{g\})
    \end{equation}
    \begin{equation}
    \small
    \label{objective_function_propeq2}
    \text{maximize}\quad \displaystyle {\cal K} (\mbox{minW}-\mbox{maxW}) - \!\!\!\!\!\!\!\!\!\!\!\!\!\sum_{(a,g) \in \mbox{achievable}(N,\hat{G})} \!\!\!\!\!\!\!\!\!\!\!\!\!x_{a,g} h_{\textsc{ff}}(\{a\},\{g\})
    \end{equation}
    subject to the following constraints:
    \begin{equation}
    \small
    \label{goal_assigned_to_only_one_agent}
    \sum_{(a,g) \in \mbox{achievable}(N,\{g\})}  x_{a,g}  = 1 , \  g \in \hat{G}
    \end{equation}
    \begin{equation}
    \small
    \label{min_equal_to_min_sum_of_goals_agent}
    \sum_{(a,g) \in \mbox{achievable}(\{a\},\hat{G})}  x_{a,g}  \geq \mbox{minG} , \  a \in N
    \end{equation}
    \begin{equation}
    \small
    \label{max_equal_to_max_sum_of_goals_agent}
    \sum_{(a,g) \in \mbox{achievable}(\{a\},\hat{G})}  x_{a,g}  \leq \mbox{maxG} , \  a \in N
    \end{equation}
    \begin{equation}
    \small
    \label{min_equal_to_min_sum_of_goals_agent_workload}
    \sum_{(a,g) \in \mbox{achievable}(\{a\},\hat{G})}  x_{a,g} h_{\textsc{ff}}(\{a\},\{g\})   \geq \mbox{minW} , \  a \in N
    \end{equation}
    \begin{equation}
    \small
    \label{max_equal_to_max_sum_of_goals_agent2}
    \sum_{(a,g) \in \mbox{achievable}(\{a\},\hat{G})}  x_{a,g} h_{\textsc{ff}}(\{a\},\{g\})  \leq \mbox{maxW} , \  a \in N
    \end{equation}
    
    We create one binary decision variable $x_{a,g}$ for each agent $a \in N$ and goal $g \in \hat{G}$.
    We also create four integer decision variables: $\mbox{minG}$ and $\mbox{maxG}$, which will keep track of the agent with most and least assigned goals; and $\mbox{minW}$ and $\mbox{maxW}$, which will keep track of the agent with highest and lowest estimated cost to achieve the goals.
    The MILP first optimizes the given fairness scheme function depending on $\cal F$(Eq. 1 to 4).
    This is done by multiplying the given $\min$ or $\min-\max$ integer variables by a constant $\cal K$ that defines how important maximizing the fairness value is. Since, in this paper we would like to give a high priority to fairness, we use a very large value for $\cal K$.
    The values of the $\min$ and $\max$ integer variables are ensured by Constraints 6 to 9.
    Constraints 8 and 9 are not needed when optimizing for goal-oriented fairness schemes, and Constraints 6 and 7 are not needed when optimizing for workload-oriented fairness schemes.
     Likewise, Constraints 7 and 9 are only needed when optimizing for any kind of Proportional Equality, since Maximin does not need to reason about the $\max$ number of goals / actions cost of any agent.
    As a secondary objective, the MILP tries to assign goals in $\hat{G}$ to agents that can achieve them with least cost.
    This is done by multiplying the binary decision variable $x_{a,g}$ ($1$ if goal $g$ is assigned to agent $a$, $0$ otherwise) by the \textsc{ff}~\cite{DBLP:journals/jair/HoffmannN01} heuristic value of $a$ achieving $g$ from $I$: $h_{\textsc{ff}}(\{a\},\{g\})$. 
    Constraint~\ref{goal_assigned_to_only_one_agent} ensures that each goal is assigned to only one agent.
    Finally, we ensure goals cannot be assigned to agents that cannot achieve them by using the \textit{achievable} function. Given a set of agents and a set of goals, it returns a set of pairs (agent $a$, goal $g$) such that $h_{\textsc{ff}}(\{a\}, \{g\})\not=\infty$.
    The solution to the MILP is \textsc{assignment}, a set containing  $(i \in N,g \in \hat{G})$ tuples.
    
    After the assignment is computed, a new MAP task is created from $\cal P$, where assignable goals are associated to specific agents as follows.
    We update the predicates in $\Psi$:
    \begin{itemize}
        \item $\bar{AGP} = \bigcup_{\bar{g} \in \bar{\hat{G}}} \{\bar{g}_s\mbox{-}ab(\mbox{par}(\hat{g}) \cup \{v_i^{agent}\})\}$, which is a set of $\bar{g}\mbox{-}ab$ predicates that augment the assignable goal predicates $\bar{\hat{G}}$ with the agent achieving them. 
        We use $AGP$ to denote the ground propositions of $\bar{AGP}$ substituted using the tuples in \textsc{assignment}.

        \item $d_g()$, which is a predicate symbol indicating that the goal proposition $g \in \hat{G}$ has already been achieved by an agent.
        This ensures we only attribute the achievement of a goal to the first agent in making it true (Def. 3).
    \end{itemize}
    Going back to our running example, having a goal to perform a work at location 1, \pddl{(work-performed L1)}, we would generate a predicate \pddl{(work-performed-AB ?location ?agent)} and a predicate \pddl{work-performed-L1-done}().
    
    Then, we update the actions that can achieve the goals.
    The function $\textsc{effPredGoal}(\bar{a}, g)$  returns the predicate in $\mbox{eff}(\bar{a})$ that, when grounded with the action's parameters $\mbox{par}(\bar{a})$ using the objects in $O$ and the types in $\cal T$, is equal to the assignable goal $g \in \hat{G}$.
    We assume predicates returned by this function do not include agents in their parameters, i.e., these predicates do not have an associated agent a priori.
    If such a predicate does not exist in $\mbox{eff}(\bar{a})$, \textsc{effPredGoal} returns $\emptyset$ and $\bar{a}$ is not updated.
    \begin{itemize}
        \item $\bar{{\cal A}}_{\neg u} = \{ \bar{a}\in \bar{{\cal A}}: \bigcup_{g \in \hat{G}}\{\textsc{effPredGoal}(\bar{a},g) = \emptyset\} \}$ is the subset of original action schemes $\bar{{\cal A}}$ that cannot achieve goals. 
        We do not update or modify these actions.

    \item $\bar{{\cal A}}_{u} = \{ \bar{u_{a}}: \bar{a} \in \bar{{\cal A}}, \bigcup_{g \in \hat{G}} \{\textsc{effPredGoal}(\bar{a},g) \neq \emptyset\}\}$ is the subset of $\bar{{\cal A}}$ that can achieve goals, updated with conditional effects that account for the agent achieving each goal proposition. As a reminder, we assume agents are the first parameter of an action ($\bar{a}[1]$).
    \begin{itemize}
    \item $\mbox{eff}(\bar{u_{a}}) = \mbox{eff}(\bar{a}) \cup  $ \begin{multline}
        \bigcup_{g \in \hat{G}} \Big\{\{\neg d_g()\} \!\!\!\!\!\!\!\!\!\!\!\! \bigcup_{\forall i =1..n-1, P^n= \textsc{effPredGoal}(\bar{a},g)} \!\!\!\!\!\!\!\!\!\!\!\! \{P[i] = g[i]\}\Big\} \rhd \\
        \Big\{d_g(), g_s\mbox{-}ab(\mbox{par}(P) \cup \{\bar{a}[1]\}) \Big\}
     \end{multline}
    Parameters, preconditions and cost of each $\bar{u_{a}}$ are equal to their $\bar{a}$ counterpart.
    \end{itemize}
    \end{itemize}
    
    The new planning task is defined as follows:
    \begin{definition}
    Given an original MAP task ${\cal P}$, a \textbf{labeled fair MAP task} is a tuple 
  \[{\cal P}_{{\cal F}_L} = \langle N, \Psi^\prime, O, {\cal T}, \bar{\cal A}^\prime,I,G^\prime\rangle\]
   where,
   \begin{itemize}
       \item $\Psi^\prime = \Psi \cup \bar{AGP} \cup \{d_g()\}$
       \item $\bar{\cal A}^\prime =  \bar{\cal A}_{\neg u} \cup \bar{\cal A}_{u}$
       \item $G^\prime = G \cup AGP $
   \end{itemize}
    \end{definition}
    
    PDDL examples of the compilation of complete tasks by this approach can be found in Appendix A.
    Since we are performing goal assignment separated from planning, this compilation is not complete. 
    Let us go back to our warehouse running example. 
    If we optimize for the Goal Maximin fairness scheme, the following could be an optimal allocation of goals to agents: the blue robot performs works at locations 1, 4 and 7; the green robot performs works at locations 2, 5 and 8; and the orange robot performs works at locations 3 and 6.
    However, when trying to solve the equivalent labeled fair MAP task, the problem will not have a solution, since there are only two hammers and the three robots have been assigned to perform a work in a black-filled location.
    In this case, the only valid solutions require one robot to just performs tasks at white-filled locations.
    This compilation is not optimal for the same reason, as we are not considering the interactions between goals.
    
    As previous works have demonstrated in the context of multi-agent path finding~\cite{DBLP:journals/corr/MaK16,DBLP:conf/aips/0001D22}, this labeled MAP task where goals are assigned to agents is harder than the unlabeled task, where any agent can achieve any goal.

\subsection{Plan Fairness Compilation}
This approach consists of a planning-based compilation that allows us to jointly solve the goal allocation and planning problems while accounting for the given fairness scheme.
The intuition behind the compilation is to extend a standard MAP task with new predicates that keep the count of how many goals have been achieved by each agent. It also introduces new actions that provide a different \emph{reward} at the end of planning, depending on the induced goals distribution.
This compilation is possible for the goal-oriented fairness schemes, as the number of goals is known a priori, thus having a bounded number of distributions of goals to agents.
However, the distribution of actions cost to agents is not bounded, so in this paper we will focus on a compilation for the goal-oriented fairness schemes, leaving as future work devising a feasible compilation for the more complex workload-oriented fairness schemes.

Unless the previous compilation, this approach could handle all the goals and not only the subset of them not true in $I$.
This is because goals are indirectly assigned to agents during planning, so the planner knows which goals from $G$ have to be re-achieved.
However, we opt to formalize the compilation over $\hat{G}$ due to efficiency issues we will further discuss.

Given a MAP task ${\cal P}$, we extend ${\cal T}$ with a type $num$, that will help us to symbolically represent numbers. 
We also extend $\Psi$ with the following predicates:
\begin{itemize}
    \item $gc(i^{agent},n^{num})$ represents the number of goals $n \in O^{num}$ achieved by agent $i \in O^{agent}$. 
    
    \item $next(n^{num},m^{num})$ represents sequential ordering of numbers, i.e., $m \in O^{num}$ follows $n \in O^{num}$. 

    \item $d_g()$ is the same one defined in the previous approach.
    
    \item $end()$ represents all goals have been achieved.
\end{itemize}

We also extend the original set of action schemes $\bar{\cal A}$ with (1) updated original action schemes that keep the count of the number of goals achieved by each agent; and (2) action schemes that increase the total plan cost based on the value of these counters and the fairness scheme $\cal F$.
\begin{itemize}

    \item $\bar{{\cal A}}_{\neg u}$ is defined as in the previous approach.
    
    \item $\bar{{\cal A}}_{u} = \{ \bar{u_{a}}: \bar{a} \in \bar{{\cal A}}, \bigcup_{g \in \hat{G}} \textsc{effPredGoal}(\bar{a},g) \neq \emptyset \}$, is the subset of $\bar{{\cal A}}$ that can achieve goals, updated with conditional effects that account for the specific agent achieving each goal proposition.
    \begin{itemize}
        \item $\mbox{par}(\bar{u_{a}}) = \mbox{par}(\bar{{a}}) \cup \{n^{num}, m^{num}\}$ 
    
        \item $\mbox{pre}(\bar{u_{a}}) = \mbox{pre}(\bar{a}) \cup \{gc(i^{agent},n), next(n,m)\}$

        \item $\mbox{eff}(\bar{u_{a}}) = \mbox{eff}(\bar{a}) \cup  $ \begin{multline}
        \bigcup_{g \in \hat{G}} \Big\{\{\neg d_g()\} \!\!\!\!\!\!\!\!\!\!\!\! \bigcup_{\forall i =1..n-1, P^n= \textsc{effPredGoal}(\bar{a},g)} \!\!\!\!\!\!\!\!\!\!\!\! \{P[i] = g[i]\}\Big\} \rhd \\
        \Big\{d_g(), \neg gc(i,n), gc(i,m) \Big\}
     \end{multline}
        \item $c(\bar{u_{a}}) = c(\bar{a})$
    \end{itemize}
    We are updating the goal counter predicates $gc(i,n)$ according to Definition 2, i.e., the goal counter is updated only when the agent $i$ is the first achiever of the goal ($\neg d_g()$).
    A PDDL representation of an action in $\bar{{\cal A}}_u$ in our warehouse domain (see Listing~\ref{perform}) is shown in Listing~\ref{stack}. As we can see, the compilation augments the original preconditions with the current number of goals of the agent \pddl{(n_goal ?a ?n1)} and the next number \pddl(next ?n1 ?n2)). It also augments the original effect with one conditional effect for each goal proposition that can be achieved using that action, i.e., perform the works in the three black locations, \pddl{L1}, \pddl{L2}, and \pddl{L3}. Examples of complete tasks can be found in Appendix A.
    
    \item $\bar{{\cal A}_{r}} = \{ \bar{r_{p}}: p\in \mbox{partitions}(|\hat{G}|,|N|) \}$  is a set of reward action schemas that assign a different cost to having a final goal distribution $p$ over the agents. 
    We generate as many action schemas as restricted partitions~\cite{andrews1998theory} of $|\hat{G}|$ into exactly $|N|$ parts are.
    These restricted partitions are all the possible ways in which we can allocate $|\hat{G}|$ goals to $|N|$ agents.
    For example, if we have four goals and two agents, this function will generate three different partitions (and therefore actions): $\{ (4,0)$, $(3,1)$, $(2,2) \}$.
    We assume numbers in each partition are automatically translated as objects of type \textit{num}.
    Each action $\bar{r_{p}}$ is generated as follows:
    \begin{itemize}
        \item $\mbox{par}(\bar{r_p}) = \bigcup_{i \in N} \{v_i^{agent}\}$ 
        \item $\mbox{pre}(\bar{r_{p}}) = G \cup \bigcup_{g \in \hat{G}} \{d_g()\} \cup \bigcup_{n \in N} \{gc(\bar{r_p}[i], p[i])\} \cup \bigcup_{(i,k) \in \mbox{par}(\bar{r_p})} \{i \neq k\}$ 
        %\daniel{t libre, que es $\neq (i,k)$? $i\not= k$?}
        \item $\mbox{eff}(\bar{r_p}) = \{end()\}$
        \item $c(\bar{r_p}) = \omega({\cal F}, p)$ is a function that maps a cost to each possible partition, i.e., distribution of goals to agents based on the fairness scheme $\cal F$.
        Thus, we are able to linearize non-linear functions such as $\min$ or $\max$. 
        When ${\cal F}=\mbox{Maximin}$, $\omega({\cal F},p)=(|\hat{G}| - \min(p)) \times {\cal K}$, where $\min(p)$ is the lowest number in the partition $p$ and $\cal K$ is a large constant.
        Therefore, higher $min(p)$ values imply lower costs.
        In case ${\cal F}=\mbox{Maximin}$, $\omega({\cal F},p)=(\max(p) - \min(p)) \times {\cal K}$.
        Therefore, lower $\max(p) - \min(p)$ values imply lower costs.
    \end{itemize}
    The precondition of these actions includes $G$ and not only the goal propositions in $\hat{G}$, which means that both assignable and non-assignable goals need to be satisfied in order to make $end$ true. 
\end{itemize}

    \begin{center}
\begin{lstlisting}[language=pddl,frame=single,caption=PDDL representation of the compiled \pddl{perform-work-black-location} action in the warehouse domain.,label=stack]
(:action perform-work-black-location
 :parameters (?a - agent ?l - black_location 
              ?h - hammer ?n1 ?n2 - number)
 :precondition (and (at ?a ?l) (holding ?a ?h)
                    (n_goal ?a ?n1) (next ?n1 ?n2))
 :effect (and (work-performed ?l)
              (when (and (= ?l L1)
                        (not (work-performed-L1-done))) 
                    (and (not (n_goal ?a ?n1))
                         (n_goal ?a ?n2)
                         (work-performed-L1-done)))
              (when (and (= ?l L2)
                         (not (work-performed-L2-done))) 
                    (and (not (n_goal ?a ?n1))
                         (n_goal ?a ?n2)
                         (work-performed-L2-done)))
              (when (and (= ?l L3)
                         (not (work-performed-L3-done))) 
                    (and (not (n_goal ?a ?n1))
                         (n_goal ?a ?n2)
                         (work-performed-L3-done))))
\end{lstlisting}
\end{center}

The new planning task is defined as follows:
    \begin{definition}
    Given an original MAP task ${\cal P}$, a \textbf{compiled fair MAP task} is a tuple 
  \[{\cal P}_{{\cal F}_C} = \langle N, \Psi^\prime, O^\prime, {\cal T}^\prime, \bar{\cal A}^\prime,I^\prime,G^\prime\rangle\]
   where,
   \begin{itemize}
       \item $\Psi^\prime = \Psi \cup \{ gc(i^{agent},n^{num}), next(n^{num},m^{num}),
       d_g(), end\}$
       \item $O^\prime = O \cup \bigcup_{n = (0..|G|+1)} \{n^{num}$\}
       \item ${\cal T}^\prime = {\cal T} \cup \{num\}$
       \item $\bar{\cal A}^\prime = \bar{\cal A}_{\neg u} \cup \bar{\cal A}_{u} \cup \bar{{\cal A}_{r}}$
       \item $I^\prime = I \cup \bigcup_{m=( 0..|G|+1)} next(m\!\!\!-\!\!\!1,m) \cup \bigcup_{i \in O^{agent}} \{gc(i,0)\}$
       \item $G^\prime = \{end()\} $
   \end{itemize}
    \end{definition}

We proceed now to formally prove that the compilation is complete and sound, as well as to discuss what defines an optimal plan in  ${\cal P}_{{\cal F}_C}$.

\begin{theorem}[Completeness] 
If there exists a plan $\pi$ that solves a MAP task ${\cal P}$, there exists a corresponding plan $\pi_{{\cal F}_C}$ that solves ${\cal P}_{{\cal F}_C}$.
\end{theorem}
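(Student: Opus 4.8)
The plan is to argue completeness constructively: from any valid plan $\pi=(a_1,\dots,a_k)$ for $\mathcal{P}$ I will build a plan $\pi_{\mathcal{F}_C}$ for $\mathcal{P}_{\mathcal{F}_C}$ by (i) replaying the actions of $\pi$ through their compiled counterparts, and (ii) appending a single reward action $\bar{r_p}$ that turns on $end()$. First I would define the action map. Every action whose schema lies in $\bar{\mathcal{A}}_{\neg u}$ is left untouched, since these schemas are copied verbatim into $\bar{\mathcal{A}}'$. Every action $a_j$ whose schema lies in $\bar{\mathcal{A}}_u$ is replaced by the same ground action extended with the two extra parameters $n,m$ of type $num$: I would instantiate $n$ to the current goal count of the executing agent $\bar{a}_j[1]$ and $m=n+1$, which is legal because $I'$ seeds $gc(i,0)$ for every agent and the $next$ chain $0\to 1\to\cdots\to|G|+1$ is present in $I'$.

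The core of the argument is an induction on the plan prefix establishing a state-correspondence invariant. I would prove that after executing the compiled prefix $(a'_1,\dots,a'_j)$ from $I'$, the resulting state $s'_j$ satisfies: its projection onto the original predicates $\Psi$ equals $\Gamma(I,(a_1,\dots,a_j))$; $d_g()\in s'_j$ iff $g$ has been made true at some step $\le j$; and each agent $i$ carries a unique fact $gc(i,c)$ where $c$ is the number of assignable goals in $\hat{G}$ for which $i$ is the first achiever within the prefix. Executability at step $j$ follows because the original preconditions $\mathrm{pre}(a_j)$ hold by validity of $\pi$, while the added preconditions $gc(i,n),next(n,m)$ hold by the invariant and the chosen grounding. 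For the effect, the added conditional effect for a goal $g$ fires precisely when $a_j$ makes $g$ true for the first time, i.e.\ when $\neg d_g()$ still holds and the parameter bindings match $g$; since the paper assumes each action achieves at most one goal and $d_g()$ is monotone (only ever added, never deleted), at most one conditional effect fires and the counter is incremented exactly on first achievement, preserving the invariant. The bound $|G|+1$ on the available numbers guarantees no agent ever runs off the end of the $next$ chain, since no agent can first-achieve more than $|\hat{G}|\le|G|$ goals.

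With the invariant in hand, consider the state $s'_k$ after the whole replayed prefix. Validity of $\pi$ gives $G\subseteq s'_k$. Every assignable goal $g\in\hat{G}$ is false in $I$ yet true at the end, so it was made true at least once, forcing $d_g()\in s'_k$. Let $c_i$ be the number of assignable goals first-achieved by agent $i$; then $\sum_i c_i=|\hat{G}|$ and the multiset $\{c_1,\dots,c_{|N|}\}$ is a restricted partition $p$ of $|\hat{G}|$ into $|N|$ parts, so the schema $\bar{r_p}\in\bar{\mathcal{A}}_r$ exists. Grounding $\bar{r_p}$ by a bijection between its agent parameters and $N$ that respects the counts makes its precondition hold in $s'_k$: $G$ and all $d_g()$ hold as just shown, each $gc(\bar{r_p}[i],p[i])$ holds by the invariant, and the $i\neq k$ conjuncts hold because distinct parameters are bound to distinct agents. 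Its single effect yields $end()$, so $\pi_{\mathcal{F}_C}=(a'_1,\dots,a'_k,\bar{r_p})$ reaches a state containing $end()=G'$ and solves $\mathcal{P}_{\mathcal{F}_C}$.

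I expect the delicate step to be the counter-bookkeeping part of the invariant: one must argue carefully that the conditional effects implement the first-achiever semantics of Definition~2 even when a goal proposition is deleted and re-achieved later in $\pi$, which rests on $d_g()$ never being deleted and on the $num$-parameter grounding remaining feasible at every step. The action map and the existence and grounding of the matching reward action are then routine bookkeeping.
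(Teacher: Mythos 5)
Your proposal is correct and takes essentially the same approach as the paper's proof: replay the original plan through compiled counterparts of its actions (which never disturb the original predicates' dynamics, since $I'\supseteq I$ and the extensions only touch the new bookkeeping predicates), then append a single ${\cal A}_r$ reward action whose preconditions --- $G$, all $d_g()$, and the $gc$ counters --- hold in the resulting state. The paper's proof is a terse sketch of exactly this argument; your inductive state-correspondence invariant and first-achiever counter bookkeeping simply make explicit the details the paper leaves implicit.
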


\begin{proof}
The single goal in ${\cal P}_{{\cal F}_C}$ is the artificial $end()$ proposition.
This proposition can only be achieved by applying any of the ${\cal A}_r$ actions, which require as preconditions that: (1) goals in $G$ are achieved; (2) all $d_g()$ propositions are true; and (3) all agents in $N$ have an associated $gc$ proposition.
Regarding (1), since none of the actions in ${\cal P}_{{\cal F}_C}$ is modifying the original predicates in $\cal P$'s actions (only reasoning about extended predicates), and $I^\prime$ is a superset of $I$,  $\pi_{{\cal F}_C}$ can also achieve $G$ from $I'$.
In the case of (2), $d_g()$'s propositions are made true by the ${\cal A}_u$ actions when an agent first achieves an assignable goal. 
Since all goals $g \in G$ can be achieved, and $\hat{G} \subseteq G$,
all the $d_g()$'s propositions can also be achieved.
Finally, regarding (3), the $gc$ propositions already exist in $I^\prime$, being only updated by the ${\cal A}_u$ actions.
Therefore, if a plan $\pi$ exists, there exist a corresponding plan $\pi_{{\cal F}_C}$.
\end{proof}

\begin{theorem}[Soundness]
Any plan $\pi_{{\cal F}_C}$ that solves ${\cal P}_{{\cal F}_C}$ induces a plan $\pi$ that solves ${\cal P}$.
\end{theorem}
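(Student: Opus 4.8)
The plan is to project any solution $\pi_{{\cal F}_C}$ of the compiled task back onto $\cal P$ by discarding everything the compilation added. I would define a map $\rho$ on ground actions as follows: every reward action $r_p \in \bar{\cal A}_r$ is deleted; every unmodified action in $\bar{\cal A}_{\neg u}$ maps to itself; and every augmented action $u_a \in \bar{\cal A}_u$ maps to its original counterpart $a$, obtained by dropping the two extra parameters $n^{num}, m^{num}$ and grounding $\bar a$ on the same objects used for $\mbox{par}(\bar a)$. Since $end()$ is the sole goal of ${\cal P}_{{\cal F}_C}$, is absent from $I^\prime$, is added only by reward actions, and is deleted by none, every solution must contain at least one reward action. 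I would take $r_p$ to be the first such action, at position $j$, truncate $\pi_{{\cal F}_C}$ immediately before it, and set $\pi = \rho(\pi_{{\cal F}_C}^{<j})$; the truncated prefix contains no reward action, so $\rho$ discards nothing from it.

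The crux is a state-correspondence lemma. Writing $F$ for the propositions of $\cal P$, I would prove by induction on prefix length that $\Gamma(I, \rho(\sigma)) = \Gamma(I^\prime, \sigma) \cap F$ for every prefix $\sigma$ of $\pi_{{\cal F}_C}^{<j}$. The base case is $I^\prime \cap F = I$, which holds because the atoms $I^\prime$ adds to $I$ are $gc$ and $next$ atoms built from the new predicates, none of which lie in $F$. The inductive step rests on the fact that the compilation is \emph{conservative} on $F$: the additional preconditions of $u_a$, namely $gc(i,n)$ and $next(n,m)$, and all conditional effects appended to $\mbox{eff}(u_a)$ involve only the new predicates $gc$, $next$ and $d_g()$. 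Hence $u_a$ is applicable in a compiled state exactly when $\mbox{pre}(a)$ holds there, and on $F$ it writes exactly the atoms that $a$ writes; the appended conditional effects, whose conditions combine $\neg d_g()$ with equality tests $P[i]=g[i]$, only ever add or delete $d_g()$ and $gc$ atoms and thus leave $F$ untouched. Finally, any conditional effects already present in $a$ have conditions $C \subseteq \Psi$, so by the induction hypothesis they are evaluated identically on the two $F$-equal states and fire consistently in both tasks.

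Finishing is then immediate. By construction the precondition of the first reward action $r_p$ contains $G$, and $r_p$ is applicable at $\Gamma(I^\prime, \pi_{{\cal F}_C}^{<j})$; therefore $G \subseteq \Gamma(I^\prime, \pi_{{\cal F}_C}^{<j})$, and since $G \subseteq F$ the correspondence lemma gives $G \subseteq \Gamma(I, \pi)$. Thus $\pi$ is a valid plan for $\cal P$, which is exactly the claim.

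I expect the main obstacle to be the bookkeeping in the correspondence lemma rather than any conceptual difficulty: one must verify precisely that every precondition and every (conditional) effect introduced by the compilation mentions only predicates outside $F$, that no added effect ever writes an original proposition, and that $\rho(u_a)$ is a legitimately grounded action of $\cal P$. Once this ``conservativeness on $F$'' is pinned down, both the applicability argument and the effect-equality argument of the inductive step follow routinely.
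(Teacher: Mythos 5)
Your proposal is correct and follows essentially the same route as the paper's own proof: strip out the reward actions $\bar{\cal A}_r$, map each extended action in $\bar{\cal A}_u$ back to its original counterpart by dropping the added parameters, preconditions, and conditional effects, and observe that the reward action's precondition (which contains $G$) guarantees the projected plan achieves the original goals. The paper states this in two sentences without proof details; your state-correspondence lemma (induction showing $\Gamma(I,\rho(\sigma)) = \Gamma(I',\sigma)\cap F$, using that the compilation only reads and writes predicates outside $F$) is simply a rigorous elaboration of the same argument.
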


\begin{proof}
Plans that solve ${\cal P}_{\cal F}$ are composed of the original (extended) actions ${\cal A}_u$ and ${\cal A}_{\neg u}$, always finishing with one ${\cal A}_r$ action that achieves the $end()$ proposition.
If we remove  ${\cal A}_r$  from $\pi_{{\cal F}_C}$, and translate all ${\cal A}_u$ to their original counterparts by removing extra parameters, preconditions, and conditional effects, 
we will have a plan $\pi$ that achieves the original goals in $G$, therefore solving ${\cal P}$.
\end{proof}

Regarding the optimality of a plan that solves ${\cal P}_{{\cal F}_C}$, we ensure costs returned by $\omega(\cdot,\cdot)$ are orders of magnitude bigger than the rest of the actions' costs by using a very large $\cal K$.
Therefore, a plan that optimally solves ${\cal P}_{{\cal F}_C}$ will first try to optimize the given fairness scheme $\cal F$, while also optimizing the cost of the original (extended) actions ${\cal A}_u$ and ${\cal A}_{\neg u}$.

Since this approach consists of a planning compilation, it would be able to provide a solution for the warehouse running example our previous approach could not solve.
Given that the compilation approach solves each problem taking into account the fairness-driven cost function, it does not pre-commit to a given assignment as the other approach, and can recover through search of bad assignment decisions.
In our running example, the planner would realize that it is not possible to assign works at black-filled locations to all robots with only two hammers, returning a cost-aware fair plan where one robot only performs works at white-filled locations.
However, this completeness comes at the expense of solving a harder planning task than the labeled MAP our previous approach solves (and therefore harder than the unlabeled problem).
In this case, we are reasoning about all the possible goal distributions on top of the original MAP task.

Finally, note that by just removing the $d_g()$ predicates, we would have a compilation able to reason over all the goals (in terms of fairness) and not only assignable ones. However, we experimentally observed a large performance decrease when using this approach due to scalability issues.
This means the $d_g()$ predicates are able to better drive the search, and this gain pays off the loss of reasoning over $\hat{G}$ rather than $G$.

\section{Evaluation}

\subsection{Experimental Setting}
We compare our plan fairness approaches against several baselines with the aim of showing that our proposed approaches outperform state of the art planners and goal pre-assignment techniques in generating cost-aware fair plans.
All techniques are summarized in Table~\ref{tab:summary_approaches}.

\begin{itemize}
    \item \textsc{lama}. 
    This approach runs the \textsc{Fast Downward}~\cite{DBLP:journals/jair/Helmert06} planner with the \textsc{lama} setting~\cite{DBLP:journals/jair/RichterW10} to solve the original MAP task $\cal P$, where goals are not pre-assigned and there is no fairness scheme considered.
    
    \item \textsc{madagascar}, which runs the SAT-based planner of the same name~\cite{rintanen2014madagascar} to solve the original MAP task $\cal P$.
    While \textsc{lama} optimizes sequential plan length, the version of \textsc{madagascar} used here optimizes parallel plan length, potentially involving more agents than a planner just optimizing plan length.
    
    \item \textsc{contract-net}. This approach was employed in the \textsc{cmap} family of planners~\cite{DBLP:journals/kais/BorrajoF19}, and is inspired by the well-known negotiation schemes in multi-agent literature~\cite{DBLP:journals/tc/Smith80a}.
    Achievable goals are allocated to agents before planning through bids.
    Each goal is assigned to the best bidding agent in an iterative process.
    The quality of a bid is determined by 
    the heuristic value computed by {\sc ff} relaxed plan heuristic of the agent achieving that goal plus all its previously assigned goals.
    Usually, achieving multiple goal propositions is more expensive than only achieving one, so \textsc{contract-net} tends to assigns goals to the best agents while also taking into account all previous assignments, trying to not overload agents. Thus, it indirectly tries to maximize fairness ($\approx$).
    After the assignment, a new labeled fair MAP task (Def. 6) is created and solved with \textsc{lama}.
    
    \item \textsc{milp}, which runs our fairness-based pre-assignment of goals. Each MILP task is solved using the PuLP Python library~\cite{mitchell2011pulp} and the CBC solver~\cite{forrest2005cbc}.
    After the assignment, a new labeled fair MAP task (Def. 6) is created and solved with \textsc{lama}.
    \textsc{milp-g-maximin}, \textsc{milp-g-propeq}, \textsc{milp-w-maximin} or \textsc{milp-w-maximin} refer to the chosen fairness metric.
    
    \item \textsc{fpc}. Our fair planning compilation (Def. 9), which jointly solves the goal allocation and planning problem in a cost-sensitive manner. 
    \textsc{lama} is used to solve the compiled planning task.
    We will use \textsc{fpc-g-maximin} or \textsc{fpc-g-propeq} depending on the fairness metric optimized.
    
\end{itemize}

\begin{table}[hbt]
    \centering
    \begin{tabular}{c|c|c}
        Approach & Fairness & Goal Assignment   \\ \hline
        
        \textsc{lama} & \xmark & \xmark \\
        
        \textsc{madagascar} &  \xmark  & \xmark \\ \hline
        
        \textsc{contract-net} &  $\approx$ & Contract net\\
        
        \textsc{milp} &  \cmark &  MILP \\ \hline
        
        \textsc{fpc} &  \cmark & Planning
    \end{tabular}
    \caption{Summary of compared approaches.} 
    \label{tab:summary_approaches}
\end{table}

\begin{table*}[hbt]
\scriptsize
\setlength{\tabcolsep}{4pt}
    \centering
    \begin{tabular}{|c|c|c|c|r|r|r|r|r|r|r|}
    \hline
\textbf{Problems} & $\mathbf{|N|}$ & $\mathbf{|G|}$ &\textbf{Approach} & \textbf{Plan Cost} & \textbf{G-Maximin} & \textbf{G-Propeq} & \textbf{W-Maximin}& \textbf{W-Propeq} & \textbf{Total Time} & \textbf{Coverage}\\ \hline
\multirow{9}{*}{All Problems}& \multirow{9}{*}{$4.41\pm1.32$}  & \multirow{9}{*}{$12.54\pm6.67$} & \textsc{lama}& $\mathbf{167.19}$ & $67.0$ & $72.76$ & $36.61$ & $66.95$ & $244.06$ & $\mathbf{175}$ \\
& & & \textsc{madagascar}& $91.96$ & $104.88$ & $90.06$ & $\mathbf{138.48}$ & $81.52$ & $\mathbf{309.89}$ & $146$ \\
& & & \textsc{contract-net}& $137.53$ & $136.0$ & $122.6$ & $98.21$ & $99.21$ & $258.96$ & $171$ \\
& & & \textsc{milp-g-maximin}& $138.89$ & $142.52$ & $122.85$ & $99.68$ & $99.03$ & $256.34$ & $172$ \\
& & & \textsc{milp-g-propeq}& $136.0$ & $\mathbf{144.99}$ & $\mathbf{139.35}$ & $103.01$ & $105.16$ & $253.71$ & $170$ \\
& & & \textsc{milp-w-maximin}& $135.5$ & $142.23$ & $127.96$ & $108.78$ & $105.87$ & $250.85$ & $172$ \\
& & & \textsc{milp-w-propeq}& $134.37$ & $141.38$ & $132.42$ & $108.73$ & $\mathbf{106.34}$ & $253.94$ & $174$ \\
& & & \textsc{fpc-g-maximin}& $113.31$ & $128.45$ & $112.39$ & $51.73$ & $94.37$ & $190.04$ & $141$ \\
& & & \textsc{fpc-g-propeq}& $112.14$ & $127.37$ & $127.2$ & $51.71$ & $97.73$ & $197.74$ & $142$ \\
\hline

\multirow{9}{*}{\makecell{Commonly \\ Solved\\ Problems}}& \multirow{9}{*}{$3.7\pm0.93$}  & \multirow{9}{*}{$9.44\pm4.46$} & \textsc{lama}& $\mathbf{104.53}$ & $38.0$ & $47.43$ & $16.22$ & $38.91$ & $176.53$ & $110$ \\
& & & \textsc{madagascar}& $70.73$ & $78.32$ & $67.27$ & $\mathbf{105.63}$ & $55.43$ & $\mathbf{195.46}$ & $110$ \\
& & & \textsc{contract-net}& $87.55$ & $83.5$ & $78.14$ & $54.03$ & $59.73$ & $195.4$ & $110$ \\
& & & \textsc{milp-g-maximin}& $88.67$ & $91.12$ & $84.28$ & $55.04$ & $61.08$ & $191.56$ & $110$ \\
& & & \textsc{milp-g-propeq}& $87.16$ & $92.12$ & $93.05$ & $57.28$ & $63.16$ & $190.9$ & $110$ \\
& & & \textsc{milp-w-maximin}& $84.68$ & $89.31$ & $84.18$ & $62.67$ & $65.61$ & $186.03$ & $110$ \\
& & & \textsc{milp-w-propeq}& $84.22$ & $92.6$ & $89.81$ & $60.57$ & $62.22$ & $188.11$ & $110$ \\
& & & \textsc{fpc-g-maximin}& $89.66$ & $\mathbf{101.29}$ & $87.78$ & $35.05$ & $77.45$ & $157.15$ & $110$ \\
& & & \textsc{fpc-g-propeq}& $89.42$ & $100.2$ & $\mathbf{99.76}$ & $34.9$ & $\mathbf{78.95}$ & $164.87$ & $110$ \\
\hline
    \end{tabular}
    \caption{Score obtained by each approach on each metric for the two sets of problems. Bold numbers represent best performance.}
    \label{tab:fairness_summary_table}
\end{table*}

We evaluate these approaches in 9 domains and 180 problems taken from the last CoDMAP competition~\cite{DBLP:conf/aaai/StolbaKK16}.
We select tasks in the unfactored centralized track, and run the script provided by the competition organizers to translate the unfactored Multi-agent PDDL to plain PDDL plus a list of agents.\footnote{\url{http://agents.fel.cvut.cz/codmap/}}
We used all domains in the competition except for \textsc{taxi} and \textsc{wireless}, where goals are already assigned to agents; and \textsc{woodworking}, where actions achieve multiple goals.
On average, $92.4\%$ of the goals in these tasks are assignable, so \textsc{contract-net}, \textsc{milp} and \textsc{fpc} cannot reason in terms of fairness on less than a $8\%$ of the goals. 
Experiments were run on an Intel(R) Xeon(R) Platinum 8375 CPU @ 2.90GHz x 4 processors with a 24GB memory bound and a time limit of 900s.
Code and benchmarks are available upon request.

\subsection{Fairness Evaluation}
We measure the following metrics for each combination of task and solver.
For all metrics, the score of an approach on an unsolved task is 0. Otherwise, the score is computed as:

\textit{\textbf{Plan Cost}}: ratio $C^*/C$, where $C$ is the cost of the plan found by the given approach, and $C^*$ is the cost of the cheapest plan found by any approach on that task.
In the case of \textsc{fpc-maximin} and \textsc{fpc-propeq}, we remove the artificial cost added by the ${\cal A}_r$ actions by applying modulo $\omega$ to $c(\pi)$.

\textit{\textbf{G-Maximin}}: ratio $\mbox{G-Maximin}/\mbox{G-Maximin}^*$, where $\mbox{G-Maximin}$ is the lowest number of goals achieved by any agent in the plan found by the given approach, and $\mbox{G-Maximin}^*$ is the highest min number of goals achieved by any agent found by any approach on that task.

\textit{\textbf{G-Propeq}}: ratio $\mbox{G-Propeq}^*/\mbox{G-Propeq}$, where $\mbox{G-Propeq}$ and $\mbox{G-Propeq}^*$ are defined as in the case of $\mbox{G-Maximin}$, but using Def. 4 rather than Def. 3.

\textit{\textbf{W-Maximin}}: ratio $\mbox{W-Maximin}/\mbox{W-Maximin}^*$, where $\mbox{W-Maximin}$ is the min cost of the actions executed by any agent in the plan found by the given approach, and $\mbox{W-Maximin}^*$ is the highest min cost of the actions executed by any agent found by any approach on that task.

\textit{\textbf{W-Propeq}}: ratio $\mbox{W-Propeq}^*/\mbox{W-Propeq}$, where $\mbox{W-Propeq}$ and $\mbox{W-Propeq}^*$ are defined as in the case of $\mbox{W-Maximin}$, but using Def. 6 rather than Def. 5.

\textit{\textbf{Total Time}}: the score of an approach on a solved task is $1$ if the task was solved within $1$ second. 
    Otherwise, if the task was solved in $T$ seconds, 
    its score is $1 - log(T) / log(900)$.
    In the case of \textsc{contract-net} and the \textsc{milp} variants, $T$ also includes the time needed to perform the goal assignment. 
    
\textit{\textbf{Coverage}}: number of solved tasks.

The score of an approach for a domain is the sum of its scores for all the tasks in the domain, with higher scores meaning better performance.
The final distribution of goals to agents needed to compute the fairness metrics is determined by analyzing the output plan, updating the number of goals achieved by an agent according to Def. 2.

Table~\ref{tab:fairness_summary_table} summarizes the results 
in two sets of MAP tasks (per-domain results can be found in Appendix B).
The first set considers all the 180 MAP problems (20 per domain) without any fairness consideration.
As expected, \textsc{lama} obtains the best results in terms of plan cost and coverage, since it is solving the easier original MAP task.
However, these good plan-cost results come at the expense of getting the worst fairness results.
In fact, \textsc{lama} tends to use only one or two agents in most of the tasks across all domains, leaving the rest of agents idle.
\textsc{madagascar}, which is the fastest approach, outperforms \textsc{lama} in terms of fairness by using more agents when trying to optimize parallel plan length.
In fact, \textsc{madagascar} achieves the best results in the W-Maximin metric.
This is because it is unnecessarily making agents to execute more actions, as highlighted by the fact that it achieves the lowest plan-cost score.
\textsc{madagascar}'s fairness results are still far from those achieved by \textsc{contract-net} and the \textsc{milp} approaches.
These algorithms return plans that are $1.2$ times worse in cost than \textsc{lama}'s. But, their fairness scores are around $2.2$ times better than those of \textsc{lama}.
The \textsc{fpc} compilation approaches outperform those solving the original MAP tasks (\textsc{lama} and \textsc{madagascar}) in terms of fairness, but they cannot compete with algorithms pre-assigning goals due to scalability issues (lower coverage).
This is specially the case in domains involving many agents and goals such as \textsc{rovers}.

We consider a second set of MAP problems consisting on those tasks for which all the approaches can return a solution.
We do this to assess the potential benefit of using our planning compilation in tasks for which it can scale.
In this set of commonly solved problems, \textsc{fpc-g-maximin} and \textsc{fpc-g-propeq} achieve the best G-Maximin and G-Propeq results, respectively. 
Although the \textsc{fpc} approaches do not optimize for workload-oriented fairness, they are able to obtain compelling results in these metrics, specifically in W-Propeq, where \textsc{fpc-g-propeq} achieves the best results in the commonly solved problems.
This suggests that optimizing for a fair distribution of goals to agents tends to be a good proxy for optimizing a fair distribution of workload (cost).
\textsc{fpc} compilations also outperform all the approaches solving labeled MAP tasks in plan cost.
In fact, the plan cost score obtained by the \textsc{fpc} approaches is only $1.2$ times worse than the score obtained by \textsc{lama}, highlighting that our compilation does not need to sacrifice much plan cost in order to generate fair plans.

\section{Related Work}
ADP~\cite{DBLP:conf/aips/CrosbyRP13} and CMAP/MAPR~\cite{DBLP:journals/kais/BorrajoF19} perform pre-assignment of goals to agents.
None of these approaches reasons about different fairness metrics and their trade-offs explicitly as we do.
Moreover, goal assignment is performed separately from planning, while our proposed plan fairness compilation performs both at the same time.

Several works in robotics also perform pre-assignment of goals to agents~\cite{DBLP:journals/amai/Conitzer10,DBLP:journals/ijrr/GerkeyM04,vig2006multi}. 
Recent works in path planning simultaneously solve goal assignment and planning as we do~\cite{DBLP:journals/corr/MaK16,DBLP:conf/aips/0001D22,DBLP:conf/aips/AakashS22}, but they do not explicitly reason on how fairly goals are allocated to agents.
A key difference is that robotics and path-planning approaches are domain-dependent, while our approach is domain-independent.

Finally, fairness has been well studied in areas such as reinforcement learning~\cite{DBLP:conf/aaai/GrupenSL22}, machine learning~\cite{barocas2017fairness}, mechanism design~\cite{DBLP:conf/fat/FinocchiaroMMPR21} and operations research~\cite{ogryczak2014fair,karsu2015inequity}.
To the best of our knowledge, this is the first work considering fairness and its trade-offs in MAP.

\section{Conclusions and Future Work}
In this paper we argue that fairness is crucial in many MAP applications.
We have adapted two well-known fairness schemes (Maximin and Proportional Equality) to MAP, proposing four plan fairness schemes out of which two are goal-oriented, i.e., they reason about the number of goals achieved by each agent; and two are workload-oriented, i.e., they reason about the cost of the actions executed by each agent.
We have also introduced two novel approaches to generate cost-aware fair plans: (1) a pre-assignment of goals through a MILP that is then compiled into a labeled MAP task, which is able to scale to problems with many agents and goals; and (2) a compilation that solves the joint problem of goal allocation and planning, which outperforms other approaches in small to medium size problems.
Both techniques show that there is no need to sacrifice much plan cost to generate fair plans.

Currently our approaches optimize for the given fairness scheme, breaking ties in favor of better (i.e., less costly) plans.
In future work we would like to explore different weighting of these two objectives in order to generate Pareto fronts that humans can examine in order to decide what the right trade-off is between plan quality and fairness.

\section*{Acknowledgements}
This paper was prepared for informational purposes by
the Artificial Intelligence Research group of JPMorgan Chase \& Co. and its affiliates (``JP Morgan''),
and is not a product of the Research Department of JP Morgan.
JP Morgan makes no representation and warranty whatsoever and disclaims all liability,
for the completeness, accuracy or reliability of the information contained herein.
This document is not intended as investment research or investment advice, or a recommendation,
offer or solicitation for the purchase or sale of any security, financial instrument, financial product or service,
or to be used in any way for evaluating the merits of participating in any transaction,
and shall not constitute a solicitation under any jurisdiction or to any person,
if such solicitation under such jurisdiction or to such person would be unlawful.

\bibliography{main}

\begin{thebibliography}{32}
\providecommand{\natexlab}[1]{#1}

\bibitem[{Aakash and Saha(2022)}]{DBLP:conf/aips/AakashS22}
Aakash; and Saha, I. 2022.
\newblock It Costs to Get Costs! {A} Heuristic-Based Scalable Goal Assignment
  Algorithm for Multi-Robot Systems.
\newblock In \emph{Proceedings of {ICAPS} 2022, Singapore (virtual), June
  13-24, 2022}, 2--10. {AAAI} Press.

\bibitem[{Andrews(1998)}]{andrews1998theory}
Andrews, G.~E. 1998.
\newblock \emph{The theory of partitions}.
\newblock 2. Cambridge university press.

\bibitem[{Barocas, Hardt, and Narayanan(2017)}]{barocas2017fairness}
Barocas, S.; Hardt, M.; and Narayanan, A. 2017.
\newblock Fairness in machine learning.
\newblock \emph{Nips tutorial}, 1: 2.

\bibitem[{Bertsimas, Farias, and
  Trichakis(2011)}]{DBLP:journals/ior/BertsimasFT11}
Bertsimas, D.; Farias, V.~F.; and Trichakis, N. 2011.
\newblock The Price of Fairness.
\newblock \emph{Oper. Res.}, 59(1): 17--31.

\bibitem[{Borrajo and Fern{\'{a}}ndez(2019)}]{DBLP:journals/kais/BorrajoF19}
Borrajo, D.; and Fern{\'{a}}ndez, S. 2019.
\newblock Efficient approaches for multi-agent planning.
\newblock \emph{Knowl. Inf. Syst.}, 58(2): 425--479.

\bibitem[{Brafman(2015)}]{DBLP:conf/ijcai/Brafman15}
Brafman, R.~I. 2015.
\newblock A Privacy Preserving Algorithm for Multi-Agent Planning and Search.
\newblock In \emph{Proceedings of {IJCAI} 2015, Buenos Aires, Argentina, July
  25-31, 2015}, 1530--1536. {AAAI} Press.

\bibitem[{Brandt et~al.(2016)Brandt, Conitzer, Endriss, Lang, and
  Procaccia}]{brandt2016handbook}
Brandt, F.; Conitzer, V.; Endriss, U.; Lang, J.; and Procaccia, A.~D. 2016.
\newblock \emph{Handbook of computational social choice}.
\newblock Cambridge University Press.

\bibitem[{Conitzer(2010)}]{DBLP:journals/amai/Conitzer10}
Conitzer, V. 2010.
\newblock Comparing multiagent systems research in combinatorial auctions and
  voting.
\newblock \emph{Ann. Math. Artif. Intell.}, 58(3-4): 239--259.

\bibitem[{Crosby, Rovatsos, and Petrick(2013)}]{DBLP:conf/aips/CrosbyRP13}
Crosby, M.; Rovatsos, M.; and Petrick, R. P.~A. 2013.
\newblock Automated Agent Decomposition for Classical Planning.
\newblock In \emph{Proceedings of {ICAPS} 2013, Rome, Italy, June 10-14, 2013}.
  {AAAI}.

\bibitem[{Finocchiaro et~al.(2021)Finocchiaro, Maio, Monachou, Patro, Raghavan,
  Stoica, and Tsirtsis}]{DBLP:conf/fat/FinocchiaroMMPR21}
Finocchiaro, J.; Maio, R.; Monachou, F.; Patro, G.~K.; Raghavan, M.; Stoica,
  A.; and Tsirtsis, S. 2021.
\newblock Bridging Machine Learning and Mechanism Design towards Algorithmic
  Fairness.
\newblock In \emph{Proceedings of FAccT '21, Virtual Event / Toronto, Canada,
  March 3-10, 2021}, 489--503. {ACM}.

\bibitem[{Forrest and Lougee-Heimer(2005)}]{forrest2005cbc}
Forrest, J.; and Lougee-Heimer, R. 2005.
\newblock CBC user guide.
\newblock In \emph{Emerging theory, methods, and applications}, 257--277.
  INFORMS.

\bibitem[{Fox, Long, and Magazzeni(2017)}]{DBLP:journals/corr/abs-1709-10256}
Fox, M.; Long, D.; and Magazzeni, D. 2017.
\newblock Explainable Planning.
\newblock \emph{CoRR}, abs/1709.10256.

\bibitem[{Gerkey and Mataric(2004)}]{DBLP:journals/ijrr/GerkeyM04}
Gerkey, B.~P.; and Mataric, M.~J. 2004.
\newblock A Formal Analysis and Taxonomy of Task Allocation in Multi-Robot
  Systems.
\newblock \emph{Int. J. Robotics Res.}, 23(9): 939--954.

\bibitem[{Ghallab, Nau, and Traverso(2004)}]{DBLP:books/daglib/0014222}
Ghallab, M.; Nau, D.~S.; and Traverso, P. 2004.
\newblock \emph{Automated planning - theory and practice}.
\newblock Elsevier.
\newblock ISBN 978-1-55860-856-6.

\bibitem[{Grupen, Selman, and Lee(2022)}]{DBLP:conf/aaai/GrupenSL22}
Grupen, N.~A.; Selman, B.; and Lee, D.~D. 2022.
\newblock Cooperative Multi-Agent Fairness and Equivariant Policies.
\newblock In \emph{Proceedings of {AAAI} 2022, Virtual Event, February 22 -
  March 1, 2022}, 9350--9359. {AAAI} Press.

\bibitem[{Helmert(2006)}]{DBLP:journals/jair/Helmert06}
Helmert, M. 2006.
\newblock The Fast Downward Planning System.
\newblock \emph{J. Artif. Intell. Res.}, 26: 191--246.

\bibitem[{Hoffmann and Nebel(2001)}]{DBLP:journals/jair/HoffmannN01}
Hoffmann, J.; and Nebel, B. 2001.
\newblock The {FF} Planning System: Fast Plan Generation Through Heuristic
  Search.
\newblock \emph{J. Artif. Intell. Res.}, 14: 253--302.

\bibitem[{H{\"{o}}ller and Behnke(2022)}]{DBLP:conf/aips/HollerB22}
H{\"{o}}ller, D.; and Behnke, G. 2022.
\newblock Encoding Lifted Classical Planning in Propositional Logic.
\newblock In \emph{Proceedings of {ICAPS} 2022, Singapore (virtual), June
  13-24, 2022}, 134--144. {AAAI} Press.

\bibitem[{Karsu and Morton(2015)}]{karsu2015inequity}
Karsu, {\"O}.; and Morton, A. 2015.
\newblock Inequity averse optimization in operational research.
\newblock \emph{European journal of operational research}, 245(2): 343--359.

\bibitem[{Ma and Koenig(2016)}]{DBLP:journals/corr/MaK16}
Ma, H.; and Koenig, S. 2016.
\newblock Optimal Target Assignment and Path Finding for Teams of Agents.
\newblock \emph{CoRR}, abs/1612.05693.

\bibitem[{Maliah, Brafman, and Shani(2017)}]{DBLP:conf/aips/MaliahBS17}
Maliah, S.; Brafman, R.~I.; and Shani, G. 2017.
\newblock Increased Privacy with Reduced Communication in Multi-Agent Planning.
\newblock In \emph{Proceedings of {ICAPS} 2017, Pittsburgh, Pennsylvania, USA,
  June 18-23, 2017}, 209--217. {AAAI} Press.

\bibitem[{Mitchell, OSullivan, and Dunning(2011)}]{mitchell2011pulp}
Mitchell, S.; OSullivan, M.; and Dunning, I. 2011.
\newblock PuLP: a linear programming toolkit for python.
\newblock \emph{The University of Auckland, Auckland, New Zealand}, 65.

\bibitem[{Ogryczak et~al.(2014)Ogryczak, Luss, Pi{\'o}ro, Nace, and
  Tomaszewski}]{ogryczak2014fair}
Ogryczak, W.; Luss, H.; Pi{\'o}ro, M.; Nace, D.; and Tomaszewski, A. 2014.
\newblock Fair optimization and networks: A survey.
\newblock \emph{Journal of Applied Mathematics}, 2014.

\bibitem[{Okumura and D{\'{e}}fago(2022)}]{DBLP:conf/aips/0001D22}
Okumura, K.; and D{\'{e}}fago, X. 2022.
\newblock Solving Simultaneous Target Assignment and Path Planning Efficiently
  with Time-Independent Execution.
\newblock In \emph{Proceedings of {ICAPS} 2022, Singapore (virtual), June
  13-24, 2022}, 270--278. {AAAI} Press.

\bibitem[{Richter and Westphal(2010)}]{DBLP:journals/jair/RichterW10}
Richter, S.; and Westphal, M. 2010.
\newblock The {LAMA} Planner: Guiding Cost-Based Anytime Planning with
  Landmarks.
\newblock \emph{J. Artif. Intell. Res.}, 39: 127--177.

\bibitem[{Rintanen(2014)}]{rintanen2014madagascar}
Rintanen, J. 2014.
\newblock Madagascar: Scalable planning with SAT.
\newblock \emph{Proceedings of the 8th International Planning Competition
  (IPC-2014)}, 21: 1--5.

\bibitem[{Rodriguez et~al.(2021)Rodriguez, Bonet, Sardi{\~{n}}a, and
  Geffner}]{DBLP:conf/aips/RodriguezBSG21}
Rodriguez, I.~D.; Bonet, B.; Sardi{\~{n}}a, S.; and Geffner, H. 2021.
\newblock Flexible {FOND} Planning with Explicit Fairness Assumptions.
\newblock In \emph{Proceedings of {ICAPS} 2021, Guangzhou, China (virtual),
  August 2-13, 2021}, 290--298. {AAAI} Press.

\bibitem[{Smith(1980)}]{DBLP:journals/tc/Smith80a}
Smith, R.~G. 1980.
\newblock The Contract Net Protocol: High-Level Communication and Control in a
  Distributed Problem Solver.
\newblock \emph{{IEEE} Trans. Computers}, 29(12): 1104--1113.

\bibitem[{Stolba, Fiser, and Komenda(2019)}]{DBLP:conf/aips/StolbaFK19}
Stolba, M.; Fiser, D.; and Komenda, A. 2019.
\newblock Privacy Leakage of Search-Based Multi-Agent Planning Algorithms.
\newblock In \emph{Proceedings of {ICAPS} 2018, Berkeley, CA, USA, July 11-15,
  2019}, 482--490. {AAAI} Press.

\bibitem[{Stolba, Komenda, and Kovacs(2016)}]{DBLP:conf/aaai/StolbaKK16}
Stolba, M.; Komenda, A.; and Kovacs, D.~L. 2016.
\newblock Competition of Distributed and Multiagent Planners (CoDMAP).
\newblock In \emph{Proceedings of {AAAI} 2016, Phoenix, Arizona, {USA}},
  4343--4345. {AAAI} Press.

\bibitem[{Torre{\~{n}}o et~al.(2018)Torre{\~{n}}o, Onaindia, Komenda, and
  Stolba}]{DBLP:journals/csur/TorrenoOKS17}
Torre{\~{n}}o, A.; Onaindia, E.; Komenda, A.; and Stolba, M. 2018.
\newblock Cooperative Multi-Agent Planning: {A} Survey.
\newblock \emph{{ACM} Comput. Surv.}, 50(6): 84:1--84:32.

\bibitem[{Vig and Adams(2006)}]{vig2006multi}
Vig, L.; and Adams, J.~A. 2006.
\newblock Multi-robot coalition formation.
\newblock \emph{IEEE transactions on robotics}, 22(4): 637--649.

\end{thebibliography}
\newpage
\onecolumn

\section*{Appendix A }\label{AppendixA}

In this section we present an original MAP task ${\cal P}$ in the driverlog domain, as well as the two equivalent compilations we introduced in this paper: labeled fair MAP task, in which goals are pre-assigned (Def. 8); and compiled fair MAP task, in which we jointly solve the goal assignment and planning problems (Def. 9).

\subsection{Original Driverlog Task}

\textbf{Domain}

\begin{lstlisting}[language=pddl]
(define (domain driverlog)
	(:requirements :typing)
(:types
	location locatable - object
	driver truck package - locatable
)
(:predicates
	(in ?obj1 - package ?obj - truck)
	(path ?x - location ?y - location)
	(empty ?v - truck)
	(at ?obj - locatable ?loc - location)
	(link ?x - location ?y - location)
	(driving ?agent - driver ?v - truck)
)

(:action LOAD-TRUCK
	:parameters (?driver - driver ?truck - truck ?obj - package ?loc - location)
	:precondition (and
		(at ?truck ?loc)
		(at ?obj ?loc)
		(driving ?driver ?truck)
	)
	:effect (and
		(not (at ?obj ?loc))
		(in ?obj ?truck)
	)
)


(:action UNLOAD-TRUCK
	:parameters (?driver - driver ?truck - truck ?obj - package ?loc - location)
	:precondition (and
		(at ?truck ?loc)
		(in ?obj ?truck)
		(driving ?driver ?truck)
	)
	:effect (and
		(not (in ?obj ?truck))
		(at ?obj ?loc)
	)
)


(:action BOARD-TRUCK
	:parameters (?driver - driver ?truck - truck ?loc - location)
	:precondition (and
		(at ?truck ?loc)
		(at ?driver ?loc)
		(empty ?truck)
	)
	:effect (and
		(not (at ?driver ?loc))
		(driving ?driver ?truck)
		(not (empty ?truck))
	)
)


(:action DISEMBARK-TRUCK
	:parameters (?driver - driver ?truck - truck ?loc - location)
	:precondition (and
		(at ?truck ?loc)
		(driving ?driver ?truck)
	)
	:effect (and
		(not (driving ?driver ?truck))
		(at ?driver ?loc)
		(empty ?truck)
	)
)


(:action DRIVE-TRUCK
	:parameters (?driver - driver ?loc-from - location ?loc-to - location ?truck - truck)
	:precondition (and
		(at ?truck ?loc-from)
		(driving ?driver ?truck)
		(link ?loc-from ?loc-to)
	)
	:effect (and
		(not (at ?truck ?loc-from))
		(at ?truck ?loc-to)
	)
)


(:action WALK
	:parameters (?driver - driver ?loc-from - location ?loc-to - location)
	:precondition (and
		(at ?driver ?loc-from)
		(path ?loc-from ?loc-to)
	)
	:effect (and
		(not (at ?driver ?loc-from))
		(at ?driver ?loc-to)
	)
)

)
\end{lstlisting}

\textbf{Problem}

\begin{lstlisting}[language=pddl]
(define (problem DLOG-3-2-4) (:domain driverlog)
(:objects
	p1-2 - location
	driver2 - driver
	truck1 - truck
	truck2 - truck
	driver1 - driver
	s2 - location
	s1 - location
	s0 - location
	p0-1 - location
	package1 - package
	package2 - package
	package3 - package
	package4 - package
	driver3 - driver
)
(:init
	(at driver1 s1)
	(at driver2 s1)
	(at driver3 s0)
	(at truck1 s1)
	(empty truck1)
	(at truck2 s0)
	(empty truck2)
	(at package1 s2)
	(at package2 s2)
	(at package3 s0)
	(at package4 s1)
	(path s0 p0-1)
	(path p0-1 s0)
	(path s1 p0-1)
	(path p0-1 s1)
	(path s1 p1-2)
	(path p1-2 s1)
	(path s2 p1-2)
	(path p1-2 s2)
	(link s0 s2)
	(link s2 s0)
	(link s1 s0)
	(link s0 s1)
	(link s2 s1)
	(link s1 s2)
)
(:goal
	(and
		(at truck1 s1)
		(at truck2 s2)
		(at package1 s1)
		(at package2 s2)
		(at package3 s2)
		(at package4 s0)
	)
)
)
\end{lstlisting}

\subsection{Labeled Fair MAP task}

\textbf{Domain}

\begin{lstlisting}[language=pddl]
(define (domain driverlog)
(:requirements :typing)
(:types location - object
 locatable - object
 driver - locatable
 driver - object
 truck - locatable
 truck - object
 package - locatable
 package - object
)

(:predicates
(in ?obj1 - package ?obj - truck)
(path ?x - location ?y - location)
(empty ?v - truck)
(at ?obj - locatable ?loc - location)
(link ?x - location ?y - location)
(driving ?agent - driver ?v - truck)
(atAB ?obj - locatable ?loc - location ?a - locatable)
(attruck2-s2-done)
(atpackage1-s1-done)
(atpackage3-s2-done)
(atpackage4-s0-done)
)

(:functions
)

(:action load-truck
:parameters ( ?driver - driver ?truck - truck ?obj - package ?loc - location)
:precondition (and
(at ?truck ?loc)
(at ?obj ?loc)
(driving ?driver ?truck)
)
:effect (and
(not (at ?obj ?loc))
(in ?obj ?truck)
)
)

(:action unload-truck
:parameters ( ?driver - driver ?truck - truck ?obj - package ?loc - location)
:precondition (and
(at ?truck ?loc)
(in ?obj ?truck)
(driving ?driver ?truck)
)
:effect (and
(not (in ?obj ?truck))
(at ?obj ?loc)
(when
(and (= ?obj package1) (= ?loc s1) (not (atpackage1-s1-done)))
(and (atAB ?obj ?loc ?driver))
)
(when
(and (= ?obj package3) (= ?loc s2) (not (atpackage3-s2-done)))
(and (atAB ?obj ?loc ?driver))
)
(when
(and (= ?obj package4) (= ?loc s0) (not (atpackage4-s0-done)))
(and (atAB ?obj ?loc ?driver))
)
)
)

(:action board-truck
:parameters ( ?driver - driver ?truck - truck ?loc - location)
:precondition (and
(at ?truck ?loc)
(at ?driver ?loc)
(empty ?truck)
)
:effect (and
(not (at ?driver ?loc))
(driving ?driver ?truck)
(not (empty ?truck))
)
)

(:action disembark-truck
:parameters ( ?driver - driver ?truck - truck ?loc - location)
:precondition (and
(at ?truck ?loc)
(driving ?driver ?truck)
)
:effect (and
(not (driving ?driver ?truck))
(at ?driver ?loc)
(empty ?truck)
)
)

(:action drive-truck
:parameters ( ?driver - driver ?loc-from - location ?loc-to - location ?truck - truck)
:precondition (and
(at ?truck ?loc-from)
(driving ?driver ?truck)
(link ?loc-from ?loc-to)
)
:effect (and
(not (at ?truck ?loc-from))
(at ?truck ?loc-to)
(when
(and (= ?truck truck2) (= ?loc-to s2) (not (attruck2-s2-done)))
(and (atAB ?truck ?loc-to ?driver))
)
)
)

(:action walk
:parameters ( ?driver - driver ?loc-from - location ?loc-to - location)
:precondition (and
(at ?driver ?loc-from)
(path ?loc-from ?loc-to)
)
:effect (and
(not (at ?driver ?loc-from))
(at ?driver ?loc-to)
)
)

)
\end{lstlisting}

\textbf{Problem}

\begin{lstlisting}[language=pddl]
(define (problem dlog-3-2-4)
(:domain driverlog)
(:objects
p1-2 - location
driver2 - driver
truck1 - truck
truck2 - truck
driver1 - driver
s2 - location
s1 - location
s0 - location
p0-1 - location
package1 - package
package2 - package
package3 - package
package4 - package
driver3 - driver
)

(:init
(at package4 s1)
(at package3 s0)
(empty truck1)
(path p0-1 s1)
(at driver3 s0)
(link s2 s1)
(link s1 s2)
(path s2 p1-2)
(at driver2 s1)
(path p0-1 s0)
(link s0 s1)
(link s0 s2)
(link s2 s0)
(at truck2 s0)
(at driver1 s1)
(link s1 s0)
(at truck1 s1)
(path p1-2 s2)
(path s1 p0-1)
(at package1 s2)
(empty truck2)
(path s0 p0-1)
(path s1 p1-2)
(at package2 s2)
(path p1-2 s1)
)

(:goal (and
(atAB package1  s1 driver1)
(atAB package4  s0 driver2)
(atAB package3  s2 driver3)
(atAB truck2  s2 driver3)
(at truck1 s1)
(at truck2 s2)
(at package1 s1)
(at package2 s2)
(at package3 s2)
(at package4 s0)
))

)
\end{lstlisting}

\subsection{Compiled Fair MAP task}

\textbf{Domain}

\begin{lstlisting}[language=pddl]
(define (domain driverlog)
(:requirements :typing)
(:types location - object
 locatable - object
 driver - locatable
 driver - object
 driver - agent
 truck - locatable
 truck - object
 package - locatable
 package - object
 agent
 number
)

(:predicates
(in ?obj1 - package ?obj - truck)
(path ?x - location ?y - location)
(empty ?v - truck)
(at ?obj - locatable ?loc - location)
(link ?x - location ?y - location)
(driving ?agent - driver ?v - truck)
(next ?n1 - number ?n2 - number)
(end)
(n_goal_achieved ?a - driver ?n - number)
(attruck2-s2-done)
(atpackage1-s1-done)
(atpackage3-s2-done)
(atpackage4-s0-done)
)

(:functions
(min-associated-cost ?n - number)
)

(:action load-truck
:parameters ( ?driver - driver ?truck - truck ?obj - package ?loc - location)
:precondition (and
(at ?truck ?loc)
(at ?obj ?loc)
(driving ?driver ?truck)
)
:effect (and
(not (at ?obj ?loc))
(in ?obj ?truck)
(increase (total-cost) 1)
)
)

(:action unload-truck
:parameters ( ?driver - driver ?truck - truck ?obj - package ?loc - location ?n1 - number ?n2 - number)
:precondition (and
(at ?truck ?loc)
(in ?obj ?truck)
(driving ?driver ?truck)
(n_goal_achieved ?driver ?n1)
(next ?n1 ?n2)
)
:effect (and
(not (in ?obj ?truck))
(at ?obj ?loc)
(when
(and (= ?obj package1) (= ?loc s1) (not (atpackage1-s1-done)))
(and (not (n_goal_achieved ?driver ?n1)) (n_goal_achieved ?driver ?n2) (atpackage1-s1-done))
)
(when
(and (= ?obj package3) (= ?loc s2) (not (atpackage3-s2-done)))
(and (not (n_goal_achieved ?driver ?n1)) (n_goal_achieved ?driver ?n2) (atpackage3-s2-done))
)
(when
(and (= ?obj package4) (= ?loc s0) (not (atpackage4-s0-done)))
(and (not (n_goal_achieved ?driver ?n1)) (n_goal_achieved ?driver ?n2) (atpackage4-s0-done))
)
(increase (total-cost) 1)
)
)

(:action board-truck
:parameters ( ?driver - driver ?truck - truck ?loc - location)
:precondition (and
(at ?truck ?loc)
(at ?driver ?loc)
(empty ?truck)
)
:effect (and
(not (at ?driver ?loc))
(driving ?driver ?truck)
(not (empty ?truck))
(increase (total-cost) 1)
)
)

(:action disembark-truck
:parameters ( ?driver - driver ?truck - truck ?loc - location)
:precondition (and
(at ?truck ?loc)
(driving ?driver ?truck)
)
:effect (and
(not (driving ?driver ?truck))
(at ?driver ?loc)
(empty ?truck)
(increase (total-cost) 1)
)
)

(:action drive-truck
:parameters ( ?driver - driver ?loc-from - location ?loc-to - location ?truck - truck ?n1 - number ?n2 - number)
:precondition (and
(at ?truck ?loc-from)
(driving ?driver ?truck)
(link ?loc-from ?loc-to)
(n_goal_achieved ?driver ?n1)
(next ?n1 ?n2)
)
:effect (and
(not (at ?truck ?loc-from))
(at ?truck ?loc-to)
(when
(and (= ?truck truck2) (= ?loc-to s2) (not (attruck2-s2-done)))
(and (not (n_goal_achieved ?driver ?n1)) (n_goal_achieved ?driver ?n2) (attruck2-s2-done))
)
(increase (total-cost) 1)
)
)

(:action walk
:parameters ( ?driver - driver ?loc-from - location ?loc-to - location)
:precondition (and
(at ?driver ?loc-from)
(path ?loc-from ?loc-to)
)
:effect (and
(not (at ?driver ?loc-from))
(at ?driver ?loc-to)
(increase (total-cost) 1)
)
)

(:action __give_min_reward_0-0-4
:parameters ( ?a0 - agent ?a1 - agent ?a2 - agent)
:precondition (and
(at truck2 s2)
(attruck2-s2-done)
(at package1 s1)
(atpackage1-s1-done)
(at package3 s2)
(atpackage3-s2-done)
(at package4 s0)
(atpackage4-s0-done)
(at package2 s2)
(at truck1 s1)
(not (= ?a0 ?a1))
(not (= ?a0 ?a2))
(not (= ?a1 ?a2))
(n_goal_achieved ?a0 n0)
(n_goal_achieved ?a1 n0)
(n_goal_achieved ?a2 n4)
)
:effect (and
(end)
(increase (total-cost) (min-associated-cost n0))
)
)

(:action __give_min_reward_0-1-3
:parameters ( ?a0 - agent ?a1 - agent ?a2 - agent)
:precondition (and
(at truck2 s2)
(attruck2-s2-done)
(at package1 s1)
(atpackage1-s1-done)
(at package3 s2)
(atpackage3-s2-done)
(at package4 s0)
(atpackage4-s0-done)
(at package2 s2)
(at truck1 s1)
(not (= ?a0 ?a1))
(not (= ?a0 ?a2))
(not (= ?a1 ?a2))
(n_goal_achieved ?a0 n0)
(n_goal_achieved ?a1 n1)
(n_goal_achieved ?a2 n3)
)
:effect (and
(end)
(increase (total-cost) (min-associated-cost n0))
)
)

(:action __give_min_reward_0-2-2
:parameters ( ?a0 - agent ?a1 - agent ?a2 - agent)
:precondition (and
(at truck2 s2)
(attruck2-s2-done)
(at package1 s1)
(atpackage1-s1-done)
(at package3 s2)
(atpackage3-s2-done)
(at package4 s0)
(atpackage4-s0-done)
(at package2 s2)
(at truck1 s1)
(not (= ?a0 ?a1))
(not (= ?a0 ?a2))
(not (= ?a1 ?a2))
(n_goal_achieved ?a0 n0)
(n_goal_achieved ?a1 n2)
(n_goal_achieved ?a2 n2)
)
:effect (and
(end)
(increase (total-cost) (min-associated-cost n0))
)
)

(:action __give_min_reward_1-1-2
:parameters ( ?a0 - agent ?a1 - agent ?a2 - agent)
:precondition (and
(at truck2 s2)
(attruck2-s2-done)
(at package1 s1)
(atpackage1-s1-done)
(at package3 s2)
(atpackage3-s2-done)
(at package4 s0)
(atpackage4-s0-done)
(at package2 s2)
(at truck1 s1)
(not (= ?a0 ?a1))
(not (= ?a0 ?a2))
(not (= ?a1 ?a2))
(n_goal_achieved ?a0 n1)
(n_goal_achieved ?a1 n1)
(n_goal_achieved ?a2 n2)
)
:effect (and
(end)
(increase (total-cost) (min-associated-cost n1))
)
)
)
\end{lstlisting}

\textbf{Problem}

\begin{lstlisting}[language=pddl]
(define (problem dlog-3-2-4)
(:domain driverlog)
(:objects
p1-2 - location
driver2 - driver
truck1 - truck
truck2 - truck
driver1 - driver
s2 - location
s1 - location
s0 - location
p0-1 - location
package1 - package
package2 - package
package3 - package
package4 - package
driver3 - driver
n0 - number
n1 - number
n2 - number
n3 - number
n4 - number
n5 - number
)

(:init
(at truck1 s1)
(link s1 s0)
(path s0 p0-1)
(empty truck2)
(at package4 s1)
(at package2 s2)
(at package3 s0)
(link s0 s1)
(link s1 s2)
(at driver2 s1)
(at driver3 s0)
(path p0-1 s1)
(path p1-2 s2)
(at package1 s2)
(at driver1 s1)
(link s2 s1)
(path s1 p0-1)
(path s2 p1-2)
(path p1-2 s1)
(link s0 s2)
(at truck2 s0)
(path s1 p1-2)
(link s2 s0)
(empty truck1)
(path p0-1 s0)
(n_goal_achieved driver1 n0)
(n_goal_achieved driver3 n0)
(n_goal_achieved driver2 n0)
(next n0 n1)
(next n1 n2)
(next n2 n3)
(next n3 n4)
(next n4 n5)
(= (min-associated-cost n0) 6000)
(= (min-associated-cost n1) 5000)
(= (min-associated-cost n2) 4000)
(= (min-associated-cost n3) 3000)
(= (min-associated-cost n4) 2000)
(= (min-associated-cost n5) 1000)
)

(:goal (and
(end)
))
(:metric minimize (total-cost))
)
\end{lstlisting}

\newpage

\section*{Appendix B }\label{AppendixB}
In this section we present the results of the algorithms on each domain.

\begin{table*}[h]
\tiny
    \centering
    \begin{tabular}{|c|c|c|c|r|r|r|r|r|r|r|}
    \hline
\textbf{Problems} & $\mathbf{|N|}$ & $\mathbf{|G|}$ &\textbf{Approach} & \textbf{Plan Cost} & \textbf{G-Maximin} & \textbf{G-Propeq} & \textbf{W-Maximin}& \textbf{W-Propeq} & \textbf{Total Time} & \textbf{Coverage}\\ \hline
\multirow{9}{*}{blocksworld} & \multirow{9}{*}{$4.0\pm0.0$} & \multirow{9}{*}{$11.35\pm2.56$} & \textsc{lama} & $19.44$ & $11.0$ & $5.54$ & $10.82$ & $7.19$ & $20.0$ & $20$ \\
& & & \textsc{madagascar} &  $0$ & $0$ & $0$ & $0$ & $0$ & $0$ & $0$ \\
& & & \textsc{contract-net} &  $17.98$ & $18.83$ & $14.5$ & $18.1$ & $13.29$ & $20.0$ & $20$ \\
& & & \textsc{milp-g-maximin} &  $18.15$ & $18.33$ & $12.0$ & $17.28$ & $12.58$ & $20.0$ & $20$ \\
& & & \textsc{milp-g-propeq} &  $18.18$ & $19.67$ & $16.0$ & $17.82$ & $12.95$ & $20.0$ & $20$ \\
& & & \textsc{milp-w-maximin} &  $18.67$ & $18.42$ & $12.75$ & $17.8$ & $13.67$ & $20.0$ & $20$ \\
& & & \textsc{milp-w-propeq} &  $17.9$ & $18.5$ & $13.83$ & $17.62$ & $14.29$ & $20.0$ & $20$ \\
& & & \textsc{fpc-g-maximin} &  $15.84$ & $18.67$ & $16.33$ & $10.06$ & $11.7$ & $20.0$ & $20$ \\
& & & \textsc{fpc-g-propeq} &  $16.14$ & $18.67$ & $18.0$ & $10.33$ & $13.68$ & $20.0$ & $20$ \\
\hline
\multirow{9}{*}{depot} & \multirow{9}{*}{$6.85\pm1.81$} & \multirow{9}{*}{$8.15\pm3.72$} & \textsc{lama} & $16.6$ & $17$ & $13.17$ & $3.39$ & $11.89$ & $20.08$ & $17$ \\
& & & \textsc{madagascar} &  $12.75$ & $20$ & $15.67$ & $19.38$ & $17.12$ & $35.33$ & $20$ \\
& & & \textsc{contract-net} &  $11.7$ & $16$ & $13.75$ & $4.01$ & $8.81$ & $24.07$ & $16$ \\
& & & \textsc{milp-g-maximin} &  $12.6$ & $16$ & $13.27$ & $3.4$ & $8.47$ & $24.07$ & $16$ \\
& & & \textsc{milp-g-propeq} &  $12.03$ & $16$ & $15.75$ & $3.85$ & $9.17$ & $24.07$ & $16$ \\
& & & \textsc{milp-w-maximin} &  $12.6$ & $16$ & $13.27$ & $3.4$ & $8.47$ & $24.07$ & $16$ \\
& & & \textsc{milp-w-propeq} &  $11.73$ & $16$ & $15.08$ & $5.39$ & $9.26$ & $24.07$ & $16$ \\
& & & \textsc{fpc-g-maximin} &  $7.45$ & $9$ & $7.6$ & $2.57$ & $7.07$ & $11.38$ & $9$ \\
& & & \textsc{fpc-g-propeq} &  $7.32$ & $9$ & $9$ & $0.8$ & $5.95$ & $11.21$ & $9$ \\
\hline
\multirow{9}{*}{driverlog} & \multirow{9}{*}{$3.3\pm1.59$} & \multirow{9}{*}{$10.75\pm7.61$} & \textsc{lama} & $20.0$ & $4.0$ & $4.43$ & $1.39$ & $5.13$ & $34.07$ & $20$ \\
& & & \textsc{madagascar} &  $12.48$ & $11.75$ & $9.78$ & $17.6$ & $10.89$ & $37.03$ & $20$ \\
& & & \textsc{contract-net} &  $14.82$ & $12.75$ & $11.08$ & $10.87$ & $7.83$ & $32.14$ & $19$ \\
& & & \textsc{milp-g-maximin} &  $14.97$ & $17.83$ & $16.28$ & $13.41$ & $11.18$ & $32.39$ & $20$ \\
& & & \textsc{milp-g-propeq} &  $14.33$ & $17.67$ & $15.23$ & $12.67$ & $9.63$ & $31.42$ & $19$ \\
& & & \textsc{milp-w-maximin} &  $14.17$ & $16.75$ & $14.9$ & $13.29$ & $9.56$ & $31.22$ & $19$ \\
& & & \textsc{milp-w-propeq} &  $14.81$ & $17.58$ & $14.45$ & $13.35$ & $9.67$ & $32.28$ & $20$ \\
& & & \textsc{fpc-g-maximin} &  $14.47$ & $18$ & $16.0$ & $4.95$ & $15.95$ & $30.06$ & $18$ \\
& & & \textsc{fpc-g-propeq} &  $14.17$ & $18$ & $18$ & $5.15$ & $15.83$ & $33.87$ & $18$ \\
\hline
\multirow{9}{*}{elevators08} & \multirow{9}{*}{$4.0\pm0.0$} & \multirow{9}{*}{$12.75\pm6.38$} & \textsc{lama} & $14.53$ & $3.5$ & $10.27$ & $1.1$ & $9.96$ & $26.17$ & $20$ \\
& & & \textsc{madagascar} &  $15.75$ & $14.0$ & $12.25$ & $17$ & $4.89$ & $48.76$ & $17$ \\
& & & \textsc{contract-net} &  $12.46$ & $9.33$ & $11.49$ & $5.5$ & $12.49$ & $24.05$ & $20$ \\
& & & \textsc{milp-g-maximin} &  $12.78$ & $9.17$ & $12.43$ & $5.86$ & $12.95$ & $22.35$ & $20$ \\
& & & \textsc{milp-g-propeq} &  $12.46$ & $10.0$ & $12.75$ & $6.94$ & $13.36$ & $22.28$ & $20$ \\
& & & \textsc{milp-w-maximin} &  $11.79$ & $10.83$ & $12.81$ & $7.92$ & $15.57$ & $22.28$ & $20$ \\
& & & \textsc{milp-w-propeq} &  $11.22$ & $8.33$ & $11.63$ & $5.83$ & $13.78$ & $22.28$ & $20$ \\
& & & \textsc{fpc-g-maximin} &  $8.64$ & $15.83$ & $15.49$ & $4.8$ & $12.5$ & $19.0$ & $19$ \\
& & & \textsc{fpc-g-propeq} &  $8.41$ & $15.83$ & $16.74$ & $4.87$ & $11.9$ & $19.0$ & $19$ \\
\hline
\multirow{9}{*}{logistics00} & \multirow{9}{*}{$5.05\pm1.47$} & \multirow{9}{*}{$10.3\pm3.28$} & \textsc{lama} & $19.87$ & $19.0$ & $18.32$ & $10.77$ & $14.33$ & $28.9$ & $20$ \\
& & & \textsc{madagascar} &  $12.62$ & $20$ & $19.07$ & $19.71$ & $11.27$ & $20$ & $20$ \\
& & & \textsc{contract-net} &  $18.95$ & $20$ & $20$ & $12.24$ & $14.95$ & $31.24$ & $20$ \\
& & & \textsc{milp-g-maximin} &  $19.3$ & $20$ & $19.55$ & $11.83$ & $15.51$ & $30.54$ & $20$ \\
& & & \textsc{milp-g-propeq} &  $18.37$ & $20$ & $19.55$ & $12.96$ & $15.12$ & $29.82$ & $20$ \\
& & & \textsc{milp-w-maximin} &  $19.01$ & $20$ & $19.3$ & $12.4$ & $14.46$ & $30.18$ & $20$ \\
& & & \textsc{milp-w-propeq} &  $18.05$ & $20$ & $19.17$ & $13.72$ & $14.76$ & $29.92$ & $20$ \\
& & & \textsc{fpc-g-maximin} &  $13.58$ & $14$ & $14$ & $4.75$ & $13.38$ & $21.43$ & $14$ \\
& & & \textsc{fpc-g-propeq} &  $13.51$ & $14$ & $14$ & $4.75$ & $13.28$ & $21.29$ & $14$ \\
\hline
\multirow{9}{*}{rovers} & \multirow{9}{*}{$6.6\pm2.44$} & \multirow{9}{*}{$13.85\pm4.45$} & \textsc{lama} & $19.86$ & $2.0$ & $5.17$ & $0.86$ & $6.51$ & $20.0$ & $20$ \\
& & & \textsc{madagascar} &  $11.33$ & $8.83$ & $8.8$ & $18.64$ & $9.75$ & $55.59$ & $20$ \\
& & & \textsc{contract-net} &  $18.1$ & $16.5$ & $12.42$ & $10.5$ & $12.17$ & $20.0$ & $20$ \\
& & & \textsc{milp-g-maximin} &  $18.37$ & $19.5$ & $12.52$ & $12.16$ & $11.53$ & $20.0$ & $20$ \\
& & & \textsc{milp-g-propeq} &  $18.21$ & $18.5$ & $18.92$ & $12.7$ & $14.98$ & $20.0$ & $20$ \\
& & & \textsc{milp-w-maximin} &  $17.01$ & $18.0$ & $18.58$ & $15.44$ & $15.78$ & $20.0$ & $20$ \\
& & & \textsc{milp-w-propeq} &  $17.31$ & $16.5$ & $17.25$ & $14.97$ & $15.36$ & $20.0$ & $20$ \\
& & & \textsc{fpc-g-maximin} &  $11.36$ & $10.17$ & $5.87$ & $3.81$ & $7.45$ & $12.0$ & $12$ \\
& & & \textsc{fpc-g-propeq} &  $11.06$ & $9.17$ & $8.52$ & $4.35$ & $8.46$ & $12.0$ & $12$ \\
\hline
\multirow{9}{*}{satellites} & \multirow{9}{*}{$5.5\pm1.73$} & \multirow{9}{*}{$24.4\pm18.09$} & \textsc{lama} & $19.94$ & $1.0$ & $3.0$ & $0.03$ & $2.96$ & $20.0$ & $20$ \\
& & & \textsc{madagascar} &  $10.24$ & $14.46$ & $10.91$ & $19.4$ & $11.02$ & $37.43$ & $20$ \\
& & & \textsc{contract-net} &  $14.23$ & $16.0$ & $15.67$ & $12.94$ & $10.81$ & $20.0$ & $20$ \\
& & & \textsc{milp-g-maximin} &  $12.84$ & $10.69$ & $8.64$ & $13.19$ & $6.2$ & $20.0$ & $20$ \\
& & & \textsc{milp-g-propeq} &  $13.41$ & $13.49$ & $14.16$ & $13.41$ & $8.37$ & $20.0$ & $20$ \\
& & & \textsc{milp-w-maximin} &  $13.46$ & $14.14$ & $11.52$ & $13.39$ & $8.65$ & $20.0$ & $20$ \\
& & & \textsc{milp-w-propeq} &  $13.41$ & $13.63$ & $14.02$ & $13.44$ & $9.1$ & $20.0$ & $20$ \\
& & & \textsc{fpc-g-maximin} &  $10.41$ & $10.79$ & $7.4$ & $5.57$ & $8.78$ & $13.0$ & $13$ \\
& & & \textsc{fpc-g-propeq} &  $9.99$ & $11.04$ & $11.16$ & $6.07$ & $10.11$ & $13.0$ & $13$ \\
\hline
\multirow{9}{*}{sokoban} & \multirow{9}{*}{$2.6\pm0.6$} & \multirow{9}{*}{$3.2\pm1.01$} & \textsc{lama} & $16.94$ & $9.0$ & $11.25$ & $8.0$ & $6.34$ & $38.59$ & $18$ \\
& & & \textsc{madagascar} &  $9.15$ & $8.0$ & $10.0$ & $14.65$ & $12.08$ & $48.08$ & $15$ \\
& & & \textsc{contract-net} &  $14.8$ & $10.0$ & $11.0$ & $8.91$ & $6.92$ & $59.34$ & $16$ \\
& & & \textsc{milp-g-maximin} &  $15.02$ & $14.0$ & $14.33$ & $7.99$ & $7.68$ & $58.88$ & $16$ \\
& & & \textsc{milp-g-propeq} &  $14.1$ & $13.0$ & $13.33$ & $7.58$ & $7.36$ & $58.0$ & $15$ \\
& & & \textsc{milp-w-maximin} &  $14.71$ & $14.5$ & $14.0$ & $9.94$ & $8.54$ & $54.12$ & $17$ \\
& & & \textsc{milp-w-propeq} &  $15.03$ & $15.0$ & $15.0$ & $9.48$ & $8.81$ & $56.35$ & $18$ \\
& & & \textsc{fpc-g-maximin} &  $16.38$ & $14.0$ & $14.33$ & $6.17$ & $6.38$ & $41.05$ & $17$ \\
& & & \textsc{fpc-g-propeq} &  $16.33$ & $14.0$ & $14.33$ & $6.24$ & $6.5$ & $33.69$ & $18$ \\
\hline
\multirow{9}{*}{zenotravel} & \multirow{9}{*}{$3.8\pm1.4$} & \multirow{9}{*}{$13.9\pm9.68$} & \textsc{lama} & $20.0$ & $0.5$ & $1.62$ & $0.26$ & $2.64$ & $36.25$ & $20$ \\
& & & \textsc{madagascar} &  $7.64$ & $7.83$ & $3.58$ & $12.12$ & $4.5$ & $27.67$ & $14$ \\
& & & \textsc{contract-net} &  $14.5$ & $16.58$ & $12.7$ & $15.14$ & $11.93$ & $28.13$ & $20$ \\
& & & \textsc{milp-g-maximin} &  $14.85$ & $17.0$ & $13.83$ & $14.55$ & $12.93$ & $28.12$ & $20$ \\
& & & \textsc{milp-g-propeq} &  $14.89$ & $16.67$ & $13.65$ & $15.1$ & $14.22$ & $28.13$ & $20$ \\
& & & \textsc{milp-w-maximin} &  $14.08$ & $13.58$ & $10.83$ & $15.21$ & $11.18$ & $28.98$ & $20$ \\
& & & \textsc{milp-w-propeq} &  $14.91$ & $15.83$ & $11.98$ & $14.94$ & $11.31$ & $29.05$ & $20$ \\
& & & \textsc{fpc-g-maximin} &  $15.17$ & $18.0$ & $15.37$ & $9.06$ & $11.16$ & $22.12$ & $19$ \\
& & & \textsc{fpc-g-propeq} &  $15.2$ & $17.67$ & $17.45$ & $9.15$ & $12.02$ & $33.68$ & $19$ \\
\hline
    \end{tabular}
    \caption{Score obtained by each approach on each of the metrics for all the problems.}
    \label{tab:fairness_table3}
\end{table*}

\begin{table*}[h]
\tiny
    \centering
    \begin{tabular}{|c|c|c|c|c|c|c|c|c|c|c|}
    \hline
\textbf{Domain} & $\mathbf{|N|}$ & $\mathbf{|G|}$ &\textbf{Approach} & \textbf{Plan Cost} & \textbf{Maximin} & \textbf{Propeq} & \textbf{Total Time} & \textbf{Coverage}\\ \hline
\multirow{9}{*}{blocksworld} & \multirow{9}{*}{$-\pm-$} & \multirow{9}{*}{$-\pm-$} & \textsc{lama} & $0$ & $0$ & $0$ & $0$ & $0$ & $0$ & $0$ \\
& & & \textsc{madagascar} &  $0$ & $0$ & $0$ & $0$ & $0$ & $0$ & $0$ \\
& & & \textsc{contract-net} &  $0$ & $0$ & $0$ & $0$ & $0$ & $0$ & $0$ \\
& & & \textsc{milp-g-maximin} &  $0$ & $0$ & $0$ & $0$ & $0$ & $0$ & $0$ \\
& & & \textsc{milp-g-propeq} &  $0$ & $0$ & $0$ & $0$ & $0$ & $0$ & $0$ \\
& & & \textsc{milp-w-maximin} &  $0$ & $0$ & $0$ & $0$ & $0$ & $0$ & $0$ \\
& & & \textsc{milp-w-propeq} &  $0$ & $0$ & $0$ & $0$ & $0$ & $0$ & $0$ \\
& & & \textsc{fpc-g-maximin} &  $0$ & $0$ & $0$ & $0$ & $0$ & $0$ & $0$ \\
& & & \textsc{fpc-g-propeq} &  $0$ & $0$ & $0$ & $0$ & $0$ & $0$ & $0$ \\
\hline
\multirow{9}{*}{depot} & \multirow{9}{*}{$5.33\pm1.0$} & \multirow{9}{*}{$6.44\pm3.28$} & \textsc{lama} & $9.0$ & $9$ & $7.33$ & $0.91$ & $5.1$ & $12.08$ & $9$ \\
& & & \textsc{madagascar} &  $5.46$ & $9$ & $7.33$ & $9$ & $7.93$ & $9$ & $9$ \\
& & & \textsc{contract-net} &  $7.09$ & $9$ & $7.42$ & $0.84$ & $4.28$ & $17.07$ & $9$ \\
& & & \textsc{milp-g-maximin} &  $7.61$ & $9$ & $7.68$ & $0.78$ & $4.5$ & $17.07$ & $9$ \\
& & & \textsc{milp-g-propeq} &  $6.93$ & $9$ & $8.75$ & $1.22$ & $4.47$ & $17.07$ & $9$ \\
& & & \textsc{milp-w-maximin} &  $7.61$ & $9$ & $7.68$ & $0.78$ & $4.5$ & $17.07$ & $9$ \\
& & & \textsc{milp-w-propeq} &  $7.09$ & $9$ & $8.42$ & $1.72$ & $4.49$ & $17.07$ & $9$ \\
& & & \textsc{fpc-g-maximin} &  $7.45$ & $9$ & $7.6$ & $2.57$ & $7.07$ & $11.38$ & $9$ \\
& & & \textsc{fpc-g-propeq} &  $7.32$ & $9$ & $9$ & $0.8$ & $5.95$ & $11.21$ & $9$ \\
\hline
\multirow{9}{*}{driverlog} & \multirow{9}{*}{$2.94\pm1.11$} & \multirow{9}{*}{$8.78\pm4.83$} & \textsc{lama} & $18.0$ & $3.0$ & $4.02$ & $1.39$ & $4.58$ & $32.07$ & $18$ \\
& & & \textsc{madagascar} &  $11.35$ & $9.75$ & $8.16$ & $16.28$ & $8.89$ & $23.84$ & $18$ \\
& & & \textsc{contract-net} &  $14.19$ & $12.0$ & $10.41$ & $10.16$ & $7.65$ & $31.14$ & $18$ \\
& & & \textsc{milp-g-maximin} &  $13.96$ & $16.83$ & $14.92$ & $12.08$ & $10.25$ & $30.39$ & $18$ \\
& & & \textsc{milp-g-propeq} &  $13.91$ & $16.67$ & $14.92$ & $11.67$ & $9.01$ & $30.42$ & $18$ \\
& & & \textsc{milp-w-maximin} &  $13.75$ & $16.5$ & $14.67$ & $12.51$ & $9.46$ & $30.22$ & $18$ \\
& & & \textsc{milp-w-propeq} &  $13.9$ & $15.83$ & $13.28$ & $11.82$ & $8.97$ & $30.28$ & $18$ \\
& & & \textsc{fpc-g-maximin} &  $14.47$ & $18$ & $16.0$ & $4.95$ & $15.95$ & $30.06$ & $18$ \\
& & & \textsc{fpc-g-propeq} &  $14.17$ & $18$ & $18$ & $5.15$ & $15.83$ & $33.87$ & $18$ \\
\hline
\multirow{9}{*}{elevators08} & \multirow{9}{*}{$4.0\pm0.0$} & \multirow{9}{*}{$10.76\pm4.47$} & \textsc{lama} & $11.77$ & $2.5$ & $7.61$ & $0.24$ & $7.91$ & $23.17$ & $17$ \\
& & & \textsc{madagascar} &  $15.75$ & $14.0$ & $12.25$ & $17$ & $4.89$ & $48.76$ & $17$ \\
& & & \textsc{contract-net} &  $9.5$ & $6.33$ & $8.49$ & $3.24$ & $9.76$ & $21.05$ & $17$ \\
& & & \textsc{milp-g-maximin} &  $9.92$ & $6.17$ & $9.43$ & $3.32$ & $9.98$ & $19.35$ & $17$ \\
& & & \textsc{milp-g-propeq} &  $9.72$ & $7.0$ & $9.75$ & $3.94$ & $10.42$ & $19.28$ & $17$ \\
& & & \textsc{milp-w-maximin} &  $9.1$ & $7.83$ & $9.91$ & $5.72$ & $13.07$ & $19.28$ & $17$ \\
& & & \textsc{milp-w-propeq} &  $8.41$ & $6.83$ & $8.97$ & $4.02$ & $10.42$ & $19.28$ & $17$ \\
& & & \textsc{fpc-g-maximin} &  $10.17$ & $14.83$ & $13.58$ & $3.83$ & $10.86$ & $17.0$ & $17$ \\
& & & \textsc{fpc-g-propeq} &  $9.95$ & $14.83$ & $14.83$ & $3.91$ & $10.26$ & $17.0$ & $17$ \\
\hline
\multirow{9}{*}{logistics00} & \multirow{9}{*}{$4.21\pm0.8$} & \multirow{9}{*}{$8.71\pm2.52$} & \textsc{lama} & $13.98$ & $14$ & $14$ & $7.9$ & $9.72$ & $22.9$ & $14$ \\
& & & \textsc{madagascar} &  $8.87$ & $14$ & $14$ & $14$ & $7.13$ & $14$ & $14$ \\
& & & \textsc{contract-net} &  $13.36$ & $14$ & $14$ & $8.51$ & $9.49$ & $25.24$ & $14$ \\
& & & \textsc{milp-g-maximin} &  $13.55$ & $14$ & $14$ & $8.65$ & $10.85$ & $24.54$ & $14$ \\
& & & \textsc{milp-g-propeq} &  $12.85$ & $14$ & $14$ & $9.32$ & $10.04$ & $23.82$ & $14$ \\
& & & \textsc{milp-w-maximin} &  $13.22$ & $14$ & $14$ & $9.03$ & $9.87$ & $24.18$ & $14$ \\
& & & \textsc{milp-w-propeq} &  $12.58$ & $14$ & $14$ & $9.61$ & $9.6$ & $23.92$ & $14$ \\
& & & \textsc{fpc-g-maximin} &  $13.58$ & $14$ & $14$ & $4.75$ & $13.38$ & $21.43$ & $14$ \\
& & & \textsc{fpc-g-propeq} &  $13.51$ & $14$ & $14$ & $4.75$ & $13.28$ & $21.29$ & $14$ \\
\hline
\multirow{9}{*}{rovers} & \multirow{9}{*}{$4.83\pm1.03$} & \multirow{9}{*}{$11.5\pm3.18$} & \textsc{lama} & $11.86$ & $2.0$ & $3.55$ & $0.86$ & $3.7$ & $12.0$ & $12$ \\
& & & \textsc{madagascar} &  $6.98$ & $6.33$ & $5.85$ & $11.58$ & $5.67$ & $18.29$ & $12$ \\
& & & \textsc{contract-net} &  $11.14$ & $10.5$ & $7.75$ & $6.4$ & $7.12$ & $12.0$ & $12$ \\
& & & \textsc{milp-g-maximin} &  $11.14$ & $12$ & $8.48$ & $7.49$ & $7.08$ & $12.0$ & $12$ \\
& & & \textsc{milp-g-propeq} &  $11.17$ & $11.5$ & $11.67$ & $7.6$ & $8.75$ & $12.0$ & $12$ \\
& & & \textsc{milp-w-maximin} &  $10.38$ & $10.0$ & $10.58$ & $8.85$ & $8.82$ & $12.0$ & $12$ \\
& & & \textsc{milp-w-propeq} &  $10.57$ & $12$ & $12$ & $8.76$ & $9.09$ & $12.0$ & $12$ \\
& & & \textsc{fpc-g-maximin} &  $11.36$ & $10.17$ & $5.87$ & $3.81$ & $7.45$ & $12.0$ & $12$ \\
& & & \textsc{fpc-g-propeq} &  $11.06$ & $9.17$ & $8.52$ & $4.35$ & $8.46$ & $12.0$ & $12$ \\
\hline
\multirow{9}{*}{satellites} & \multirow{9}{*}{$5.0\pm1.31$} & \multirow{9}{*}{$22.27\pm19.3$} & \textsc{lama} & $12.94$ & $1.0$ & $2.17$ & $0.03$ & $2.01$ & $13.0$ & $13$ \\
& & & \textsc{madagascar} &  $6.74$ & $9.4$ & $6.93$ & $13$ & $6.34$ & $13$ & $13$ \\
& & & \textsc{contract-net} &  $9.75$ & $12.0$ & $11.24$ & $8.39$ & $7.03$ & $13.0$ & $13$ \\
& & & \textsc{milp-g-maximin} &  $9.46$ & $10.29$ & $7.6$ & $8.1$ & $4.39$ & $13.0$ & $13$ \\
& & & \textsc{milp-g-propeq} &  $9.54$ & $11.29$ & $11.21$ & $8.29$ & $4.85$ & $13.0$ & $13$ \\
& & & \textsc{milp-w-maximin} &  $9.46$ & $11.14$ & $8.67$ & $8.51$ & $5.58$ & $13.0$ & $13$ \\
& & & \textsc{milp-w-propeq} &  $9.46$ & $11.43$ & $11.23$ & $8.36$ & $5.71$ & $13.0$ & $13$ \\
& & & \textsc{fpc-g-maximin} &  $10.41$ & $10.79$ & $7.4$ & $5.57$ & $8.78$ & $13.0$ & $13$ \\
& & & \textsc{fpc-g-propeq} &  $9.99$ & $11.04$ & $11.16$ & $6.07$ & $10.11$ & $13.0$ & $13$ \\
\hline
\multirow{9}{*}{sokoban} & \multirow{9}{*}{$2.38\pm0.51$} & \multirow{9}{*}{$3.23\pm0.73$} & \textsc{lama} & $12.98$ & $6.0$ & $7.58$ & $4.69$ & $4.16$ & $31.07$ & $13$ \\
& & & \textsc{madagascar} &  $7.94$ & $8.0$ & $9.17$ & $12.65$ & $10.08$ & $40.9$ & $13$ \\
& & & \textsc{contract-net} &  $12.02$ & $7.0$ & $8.33$ & $6.6$ & $5.81$ & $53.77$ & $13$ \\
& & & \textsc{milp-g-maximin} &  $12.27$ & $11.0$ & $11.67$ & $5.58$ & $5.72$ & $53.1$ & $13$ \\
& & & \textsc{milp-g-propeq} &  $12.27$ & $11.0$ & $11.67$ & $5.58$ & $5.72$ & $53.19$ & $13$ \\
& & & \textsc{milp-w-maximin} &  $11.01$ & $10.5$ & $10.33$ & $7.31$ & $6.34$ & $47.29$ & $13$ \\
& & & \textsc{milp-w-propeq} &  $11.39$ & $11.0$ & $11.33$ & $6.85$ & $6.66$ & $49.52$ & $13$ \\
& & & \textsc{fpc-g-maximin} &  $12.65$ & $11.0$ & $11.0$ & $3.43$ & $4.76$ & $35.16$ & $13$ \\
& & & \textsc{fpc-g-propeq} &  $12.65$ & $11.0$ & $11.0$ & $3.58$ & $4.72$ & $27.82$ & $13$ \\
\hline
\multirow{9}{*}{zenotravel} & \multirow{9}{*}{$3.21\pm1.25$} & \multirow{9}{*}{$8.43\pm4.8$} & \textsc{lama} & $14.0$ & $0.5$ & $1.16$ & $0.21$ & $1.73$ & $30.25$ & $14$ \\
& & & \textsc{madagascar} &  $7.64$ & $7.83$ & $3.58$ & $12.12$ & $4.5$ & $27.67$ & $14$ \\
& & & \textsc{contract-net} &  $10.5$ & $12.67$ & $10.5$ & $9.88$ & $8.6$ & $22.13$ & $14$ \\
& & & \textsc{milp-g-maximin} &  $10.76$ & $11.83$ & $10.5$ & $9.05$ & $8.3$ & $22.12$ & $14$ \\
& & & \textsc{milp-g-propeq} &  $10.77$ & $11.67$ & $11.08$ & $9.66$ & $9.89$ & $22.13$ & $14$ \\
& & & \textsc{milp-w-maximin} &  $10.16$ & $10.33$ & $8.33$ & $9.96$ & $7.96$ & $22.98$ & $14$ \\
& & & \textsc{milp-w-propeq} &  $10.83$ & $12.5$ & $10.58$ & $9.42$ & $7.29$ & $23.05$ & $14$ \\
& & & \textsc{fpc-g-maximin} &  $11.57$ & $13.5$ & $12.33$ & $6.15$ & $9.2$ & $17.12$ & $14$ \\
& & & \textsc{fpc-g-propeq} &  $11.77$ & $13.17$ & $13.25$ & $6.28$ & $10.33$ & $28.68$ & $14$ \\
\hline
    \end{tabular}
    \caption{Score obtained by each approach on each of the metrics for the commonly solved problems.}
    \label{tab:fairness_table3}
\end{table*}

\end{document}

% --- supplement: appendix_main.tex ---

\onecolumn

\section*{Appendix A }\label{AppendixA}

In this section we present an original MAP task ${\cal P}$ in the driverlog domain, as well as the two equivalent compilations we introduced in this paper: labeled fair MAP task, in which goals are pre-assigned (Def. 8); and compiled fair MAP task, in which we jointly solve the goal assignment and planning problems (Def. 9).
%%

\subsection{Original Driverlog Task}

\textbf{Domain}

\begin{lstlisting}[language=pddl]
(define (domain driverlog)
	(:requirements :typing)
(:types
	location locatable - object
	driver truck package - locatable
)
(:predicates
	(in ?obj1 - package ?obj - truck)
	(path ?x - location ?y - location)
	(empty ?v - truck)
	(at ?obj - locatable ?loc - location)
	(link ?x - location ?y - location)
	(driving ?agent - driver ?v - truck)
)

(:action LOAD-TRUCK
	:parameters (?driver - driver ?truck - truck ?obj - package ?loc - location)
	:precondition (and
		(at ?truck ?loc)
		(at ?obj ?loc)
		(driving ?driver ?truck)
	)
	:effect (and
		(not (at ?obj ?loc))
		(in ?obj ?truck)
	)
)


(:action UNLOAD-TRUCK
	:parameters (?driver - driver ?truck - truck ?obj - package ?loc - location)
	:precondition (and
		(at ?truck ?loc)
		(in ?obj ?truck)
		(driving ?driver ?truck)
	)
	:effect (and
		(not (in ?obj ?truck))
		(at ?obj ?loc)
	)
)


(:action BOARD-TRUCK
	:parameters (?driver - driver ?truck - truck ?loc - location)
	:precondition (and
		(at ?truck ?loc)
		(at ?driver ?loc)
		(empty ?truck)
	)
	:effect (and
		(not (at ?driver ?loc))
		(driving ?driver ?truck)
		(not (empty ?truck))
	)
)


(:action DISEMBARK-TRUCK
	:parameters (?driver - driver ?truck - truck ?loc - location)
	:precondition (and
		(at ?truck ?loc)
		(driving ?driver ?truck)
	)
	:effect (and
		(not (driving ?driver ?truck))
		(at ?driver ?loc)
		(empty ?truck)
	)
)


(:action DRIVE-TRUCK
	:parameters (?driver - driver ?loc-from - location ?loc-to - location ?truck - truck)
	:precondition (and
		(at ?truck ?loc-from)
		(driving ?driver ?truck)
		(link ?loc-from ?loc-to)
	)
	:effect (and
		(not (at ?truck ?loc-from))
		(at ?truck ?loc-to)
	)
)


(:action WALK
	:parameters (?driver - driver ?loc-from - location ?loc-to - location)
	:precondition (and
		(at ?driver ?loc-from)
		(path ?loc-from ?loc-to)
	)
	:effect (and
		(not (at ?driver ?loc-from))
		(at ?driver ?loc-to)
	)
)

)
\end{lstlisting}

\textbf{Problem}

\begin{lstlisting}[language=pddl]
(define (problem DLOG-3-2-4) (:domain driverlog)
(:objects
	p1-2 - location
	driver2 - driver
	truck1 - truck
	truck2 - truck
	driver1 - driver
	s2 - location
	s1 - location
	s0 - location
	p0-1 - location
	package1 - package
	package2 - package
	package3 - package
	package4 - package
	driver3 - driver
)
(:init
	(at driver1 s1)
	(at driver2 s1)
	(at driver3 s0)
	(at truck1 s1)
	(empty truck1)
	(at truck2 s0)
	(empty truck2)
	(at package1 s2)
	(at package2 s2)
	(at package3 s0)
	(at package4 s1)
	(path s0 p0-1)
	(path p0-1 s0)
	(path s1 p0-1)
	(path p0-1 s1)
	(path s1 p1-2)
	(path p1-2 s1)
	(path s2 p1-2)
	(path p1-2 s2)
	(link s0 s2)
	(link s2 s0)
	(link s1 s0)
	(link s0 s1)
	(link s2 s1)
	(link s1 s2)
)
(:goal
	(and
		(at truck1 s1)
		(at truck2 s2)
		(at package1 s1)
		(at package2 s2)
		(at package3 s2)
		(at package4 s0)
	)
)
)
\end{lstlisting}

\subsection{Labeled Fair MAP task}

\textbf{Domain}

\begin{lstlisting}[language=pddl]
(define (domain driverlog)
(:requirements :typing)
(:types location - object
 locatable - object
 driver - locatable
 driver - object
 truck - locatable
 truck - object
 package - locatable
 package - object
)

(:predicates
(in ?obj1 - package ?obj - truck)
(path ?x - location ?y - location)
(empty ?v - truck)
(at ?obj - locatable ?loc - location)
(link ?x - location ?y - location)
(driving ?agent - driver ?v - truck)
(atAB ?obj - locatable ?loc - location ?a - locatable)
(attruck2-s2-done)
(atpackage1-s1-done)
(atpackage3-s2-done)
(atpackage4-s0-done)
)

(:functions
)

(:action load-truck
:parameters ( ?driver - driver ?truck - truck ?obj - package ?loc - location)
:precondition (and
(at ?truck ?loc)
(at ?obj ?loc)
(driving ?driver ?truck)
)
:effect (and
(not (at ?obj ?loc))
(in ?obj ?truck)
)
)

(:action unload-truck
:parameters ( ?driver - driver ?truck - truck ?obj - package ?loc - location)
:precondition (and
(at ?truck ?loc)
(in ?obj ?truck)
(driving ?driver ?truck)
)
:effect (and
(not (in ?obj ?truck))
(at ?obj ?loc)
(when
(and (= ?obj package1) (= ?loc s1) (not (atpackage1-s1-done)))
(and (atAB ?obj ?loc ?driver))
)
(when
(and (= ?obj package3) (= ?loc s2) (not (atpackage3-s2-done)))
(and (atAB ?obj ?loc ?driver))
)
(when
(and (= ?obj package4) (= ?loc s0) (not (atpackage4-s0-done)))
(and (atAB ?obj ?loc ?driver))
)
)
)

(:action board-truck
:parameters ( ?driver - driver ?truck - truck ?loc - location)
:precondition (and
(at ?truck ?loc)
(at ?driver ?loc)
(empty ?truck)
)
:effect (and
(not (at ?driver ?loc))
(driving ?driver ?truck)
(not (empty ?truck))
)
)

(:action disembark-truck
:parameters ( ?driver - driver ?truck - truck ?loc - location)
:precondition (and
(at ?truck ?loc)
(driving ?driver ?truck)
)
:effect (and
(not (driving ?driver ?truck))
(at ?driver ?loc)
(empty ?truck)
)
)

(:action drive-truck
:parameters ( ?driver - driver ?loc-from - location ?loc-to - location ?truck - truck)
:precondition (and
(at ?truck ?loc-from)
(driving ?driver ?truck)
(link ?loc-from ?loc-to)
)
:effect (and
(not (at ?truck ?loc-from))
(at ?truck ?loc-to)
(when
(and (= ?truck truck2) (= ?loc-to s2) (not (attruck2-s2-done)))
(and (atAB ?truck ?loc-to ?driver))
)
)
)

(:action walk
:parameters ( ?driver - driver ?loc-from - location ?loc-to - location)
:precondition (and
(at ?driver ?loc-from)
(path ?loc-from ?loc-to)
)
:effect (and
(not (at ?driver ?loc-from))
(at ?driver ?loc-to)
)
)

)
\end{lstlisting}

\textbf{Problem}

\begin{lstlisting}[language=pddl]
(define (problem dlog-3-2-4)
(:domain driverlog)
(:objects
p1-2 - location
driver2 - driver
truck1 - truck
truck2 - truck
driver1 - driver
s2 - location
s1 - location
s0 - location
p0-1 - location
package1 - package
package2 - package
package3 - package
package4 - package
driver3 - driver
)

(:init
(at package4 s1)
(at package3 s0)
(empty truck1)
(path p0-1 s1)
(at driver3 s0)
(link s2 s1)
(link s1 s2)
(path s2 p1-2)
(at driver2 s1)
(path p0-1 s0)
(link s0 s1)
(link s0 s2)
(link s2 s0)
(at truck2 s0)
(at driver1 s1)
(link s1 s0)
(at truck1 s1)
(path p1-2 s2)
(path s1 p0-1)
(at package1 s2)
(empty truck2)
(path s0 p0-1)
(path s1 p1-2)
(at package2 s2)
(path p1-2 s1)
)

(:goal (and
(atAB package1  s1 driver1)
(atAB package4  s0 driver2)
(atAB package3  s2 driver3)
(atAB truck2  s2 driver3)
(at truck1 s1)
(at truck2 s2)
(at package1 s1)
(at package2 s2)
(at package3 s2)
(at package4 s0)
))

)
\end{lstlisting}

\subsection{Compiled Fair MAP task}

\textbf{Domain}

\begin{lstlisting}[language=pddl]
(define (domain driverlog)
(:requirements :typing)
(:types location - object
 locatable - object
 driver - locatable
 driver - object
 driver - agent
 truck - locatable
 truck - object
 package - locatable
 package - object
 agent
 number
)

(:predicates
(in ?obj1 - package ?obj - truck)
(path ?x - location ?y - location)
(empty ?v - truck)
(at ?obj - locatable ?loc - location)
(link ?x - location ?y - location)
(driving ?agent - driver ?v - truck)
(next ?n1 - number ?n2 - number)
(end)
(n_goal_achieved ?a - driver ?n - number)
(attruck2-s2-done)
(atpackage1-s1-done)
(atpackage3-s2-done)
(atpackage4-s0-done)
)

(:functions
(min-associated-cost ?n - number)
)

(:action load-truck
:parameters ( ?driver - driver ?truck - truck ?obj - package ?loc - location)
:precondition (and
(at ?truck ?loc)
(at ?obj ?loc)
(driving ?driver ?truck)
)
:effect (and
(not (at ?obj ?loc))
(in ?obj ?truck)
(increase (total-cost) 1)
)
)

(:action unload-truck
:parameters ( ?driver - driver ?truck - truck ?obj - package ?loc - location ?n1 - number ?n2 - number)
:precondition (and
(at ?truck ?loc)
(in ?obj ?truck)
(driving ?driver ?truck)
(n_goal_achieved ?driver ?n1)
(next ?n1 ?n2)
)
:effect (and
(not (in ?obj ?truck))
(at ?obj ?loc)
(when
(and (= ?obj package1) (= ?loc s1) (not (atpackage1-s1-done)))
(and (not (n_goal_achieved ?driver ?n1)) (n_goal_achieved ?driver ?n2) (atpackage1-s1-done))
)
(when
(and (= ?obj package3) (= ?loc s2) (not (atpackage3-s2-done)))
(and (not (n_goal_achieved ?driver ?n1)) (n_goal_achieved ?driver ?n2) (atpackage3-s2-done))
)
(when
(and (= ?obj package4) (= ?loc s0) (not (atpackage4-s0-done)))
(and (not (n_goal_achieved ?driver ?n1)) (n_goal_achieved ?driver ?n2) (atpackage4-s0-done))
)
(increase (total-cost) 1)
)
)

(:action board-truck
:parameters ( ?driver - driver ?truck - truck ?loc - location)
:precondition (and
(at ?truck ?loc)
(at ?driver ?loc)
(empty ?truck)
)
:effect (and
(not (at ?driver ?loc))
(driving ?driver ?truck)
(not (empty ?truck))
(increase (total-cost) 1)
)
)

(:action disembark-truck
:parameters ( ?driver - driver ?truck - truck ?loc - location)
:precondition (and
(at ?truck ?loc)
(driving ?driver ?truck)
)
:effect (and
(not (driving ?driver ?truck))
(at ?driver ?loc)
(empty ?truck)
(increase (total-cost) 1)
)
)

(:action drive-truck
:parameters ( ?driver - driver ?loc-from - location ?loc-to - location ?truck - truck ?n1 - number ?n2 - number)
:precondition (and
(at ?truck ?loc-from)
(driving ?driver ?truck)
(link ?loc-from ?loc-to)
(n_goal_achieved ?driver ?n1)
(next ?n1 ?n2)
)
:effect (and
(not (at ?truck ?loc-from))
(at ?truck ?loc-to)
(when
(and (= ?truck truck2) (= ?loc-to s2) (not (attruck2-s2-done)))
(and (not (n_goal_achieved ?driver ?n1)) (n_goal_achieved ?driver ?n2) (attruck2-s2-done))
)
(increase (total-cost) 1)
)
)

(:action walk
:parameters ( ?driver - driver ?loc-from - location ?loc-to - location)
:precondition (and
(at ?driver ?loc-from)
(path ?loc-from ?loc-to)
)
:effect (and
(not (at ?driver ?loc-from))
(at ?driver ?loc-to)
(increase (total-cost) 1)
)
)

(:action __give_min_reward_0-0-4
:parameters ( ?a0 - agent ?a1 - agent ?a2 - agent)
:precondition (and
(at truck2 s2)
(attruck2-s2-done)
(at package1 s1)
(atpackage1-s1-done)
(at package3 s2)
(atpackage3-s2-done)
(at package4 s0)
(atpackage4-s0-done)
(at package2 s2)
(at truck1 s1)
(not (= ?a0 ?a1))
(not (= ?a0 ?a2))
(not (= ?a1 ?a2))
(n_goal_achieved ?a0 n0)
(n_goal_achieved ?a1 n0)
(n_goal_achieved ?a2 n4)
)
:effect (and
(end)
(increase (total-cost) (min-associated-cost n0))
)
)

(:action __give_min_reward_0-1-3
:parameters ( ?a0 - agent ?a1 - agent ?a2 - agent)
:precondition (and
(at truck2 s2)
(attruck2-s2-done)
(at package1 s1)
(atpackage1-s1-done)
(at package3 s2)
(atpackage3-s2-done)
(at package4 s0)
(atpackage4-s0-done)
(at package2 s2)
(at truck1 s1)
(not (= ?a0 ?a1))
(not (= ?a0 ?a2))
(not (= ?a1 ?a2))
(n_goal_achieved ?a0 n0)
(n_goal_achieved ?a1 n1)
(n_goal_achieved ?a2 n3)
)
:effect (and
(end)
(increase (total-cost) (min-associated-cost n0))
)
)

(:action __give_min_reward_0-2-2
:parameters ( ?a0 - agent ?a1 - agent ?a2 - agent)
:precondition (and
(at truck2 s2)
(attruck2-s2-done)
(at package1 s1)
(atpackage1-s1-done)
(at package3 s2)
(atpackage3-s2-done)
(at package4 s0)
(atpackage4-s0-done)
(at package2 s2)
(at truck1 s1)
(not (= ?a0 ?a1))
(not (= ?a0 ?a2))
(not (= ?a1 ?a2))
(n_goal_achieved ?a0 n0)
(n_goal_achieved ?a1 n2)
(n_goal_achieved ?a2 n2)
)
:effect (and
(end)
(increase (total-cost) (min-associated-cost n0))
)
)

(:action __give_min_reward_1-1-2
:parameters ( ?a0 - agent ?a1 - agent ?a2 - agent)
:precondition (and
(at truck2 s2)
(attruck2-s2-done)
(at package1 s1)
(atpackage1-s1-done)
(at package3 s2)
(atpackage3-s2-done)
(at package4 s0)
(atpackage4-s0-done)
(at package2 s2)
(at truck1 s1)
(not (= ?a0 ?a1))
(not (= ?a0 ?a2))
(not (= ?a1 ?a2))
(n_goal_achieved ?a0 n1)
(n_goal_achieved ?a1 n1)
(n_goal_achieved ?a2 n2)
)
:effect (and
(end)
(increase (total-cost) (min-associated-cost n1))
)
)
)
\end{lstlisting}

\textbf{Problem}

\begin{lstlisting}[language=pddl]
(define (problem dlog-3-2-4)
(:domain driverlog)
(:objects
p1-2 - location
driver2 - driver
truck1 - truck
truck2 - truck
driver1 - driver
s2 - location
s1 - location
s0 - location
p0-1 - location
package1 - package
package2 - package
package3 - package
package4 - package
driver3 - driver
n0 - number
n1 - number
n2 - number
n3 - number
n4 - number
n5 - number
)

(:init
(at truck1 s1)
(link s1 s0)
(path s0 p0-1)
(empty truck2)
(at package4 s1)
(at package2 s2)
(at package3 s0)
(link s0 s1)
(link s1 s2)
(at driver2 s1)
(at driver3 s0)
(path p0-1 s1)
(path p1-2 s2)
(at package1 s2)
(at driver1 s1)
(link s2 s1)
(path s1 p0-1)
(path s2 p1-2)
(path p1-2 s1)
(link s0 s2)
(at truck2 s0)
(path s1 p1-2)
(link s2 s0)
(empty truck1)
(path p0-1 s0)
(n_goal_achieved driver1 n0)
(n_goal_achieved driver3 n0)
(n_goal_achieved driver2 n0)
(next n0 n1)
(next n1 n2)
(next n2 n3)
(next n3 n4)
(next n4 n5)
(= (min-associated-cost n0) 6000)
(= (min-associated-cost n1) 5000)
(= (min-associated-cost n2) 4000)
(= (min-associated-cost n3) 3000)
(= (min-associated-cost n4) 2000)
(= (min-associated-cost n5) 1000)
)

(:goal (and
(end)
))
(:metric minimize (total-cost))
)
\end{lstlisting}

\newpage

\section*{Appendix B }\label{AppendixB}
In this section we present the results of the algorithms on each domain.

\begin{table*}[h]
\tiny
    \centering
    \begin{tabular}{|c|c|c|c|r|r|r|r|r|r|r|}
    \hline
\textbf{Problems} & $\mathbf{|N|}$ & $\mathbf{|G|}$ &\textbf{Approach} & \textbf{Plan Cost} & \textbf{G-Maximin} & \textbf{G-Propeq} & \textbf{W-Maximin}& \textbf{W-Propeq} & \textbf{Total Time} & \textbf{Coverage}\\ \hline
\multirow{9}{*}{blocksworld} & \multirow{9}{*}{$4.0\pm0.0$} & \multirow{9}{*}{$11.35\pm2.56$} & \textsc{lama} & $19.44$ & $11.0$ & $5.54$ & $10.82$ & $7.19$ & $20.0$ & $20$ \\
& & & \textsc{madagascar} &  $0$ & $0$ & $0$ & $0$ & $0$ & $0$ & $0$ \\
& & & \textsc{contract-net} &  $17.98$ & $18.83$ & $14.5$ & $18.1$ & $13.29$ & $20.0$ & $20$ \\
& & & \textsc{milp-g-maximin} &  $18.15$ & $18.33$ & $12.0$ & $17.28$ & $12.58$ & $20.0$ & $20$ \\
& & & \textsc{milp-g-propeq} &  $18.18$ & $19.67$ & $16.0$ & $17.82$ & $12.95$ & $20.0$ & $20$ \\
& & & \textsc{milp-w-maximin} &  $18.67$ & $18.42$ & $12.75$ & $17.8$ & $13.67$ & $20.0$ & $20$ \\
& & & \textsc{milp-w-propeq} &  $17.9$ & $18.5$ & $13.83$ & $17.62$ & $14.29$ & $20.0$ & $20$ \\
& & & \textsc{fpc-g-maximin} &  $15.84$ & $18.67$ & $16.33$ & $10.06$ & $11.7$ & $20.0$ & $20$ \\
& & & \textsc{fpc-g-propeq} &  $16.14$ & $18.67$ & $18.0$ & $10.33$ & $13.68$ & $20.0$ & $20$ \\
\hline
\multirow{9}{*}{depot} & \multirow{9}{*}{$6.85\pm1.81$} & \multirow{9}{*}{$8.15\pm3.72$} & \textsc{lama} & $16.6$ & $17$ & $13.17$ & $3.39$ & $11.89$ & $20.08$ & $17$ \\
& & & \textsc{madagascar} &  $12.75$ & $20$ & $15.67$ & $19.38$ & $17.12$ & $35.33$ & $20$ \\
& & & \textsc{contract-net} &  $11.7$ & $16$ & $13.75$ & $4.01$ & $8.81$ & $24.07$ & $16$ \\
& & & \textsc{milp-g-maximin} &  $12.6$ & $16$ & $13.27$ & $3.4$ & $8.47$ & $24.07$ & $16$ \\
& & & \textsc{milp-g-propeq} &  $12.03$ & $16$ & $15.75$ & $3.85$ & $9.17$ & $24.07$ & $16$ \\
& & & \textsc{milp-w-maximin} &  $12.6$ & $16$ & $13.27$ & $3.4$ & $8.47$ & $24.07$ & $16$ \\
& & & \textsc{milp-w-propeq} &  $11.73$ & $16$ & $15.08$ & $5.39$ & $9.26$ & $24.07$ & $16$ \\
& & & \textsc{fpc-g-maximin} &  $7.45$ & $9$ & $7.6$ & $2.57$ & $7.07$ & $11.38$ & $9$ \\
& & & \textsc{fpc-g-propeq} &  $7.32$ & $9$ & $9$ & $0.8$ & $5.95$ & $11.21$ & $9$ \\
\hline
\multirow{9}{*}{driverlog} & \multirow{9}{*}{$3.3\pm1.59$} & \multirow{9}{*}{$10.75\pm7.61$} & \textsc{lama} & $20.0$ & $4.0$ & $4.43$ & $1.39$ & $5.13$ & $34.07$ & $20$ \\
& & & \textsc{madagascar} &  $12.48$ & $11.75$ & $9.78$ & $17.6$ & $10.89$ & $37.03$ & $20$ \\
& & & \textsc{contract-net} &  $14.82$ & $12.75$ & $11.08$ & $10.87$ & $7.83$ & $32.14$ & $19$ \\
& & & \textsc{milp-g-maximin} &  $14.97$ & $17.83$ & $16.28$ & $13.41$ & $11.18$ & $32.39$ & $20$ \\
& & & \textsc{milp-g-propeq} &  $14.33$ & $17.67$ & $15.23$ & $12.67$ & $9.63$ & $31.42$ & $19$ \\
& & & \textsc{milp-w-maximin} &  $14.17$ & $16.75$ & $14.9$ & $13.29$ & $9.56$ & $31.22$ & $19$ \\
& & & \textsc{milp-w-propeq} &  $14.81$ & $17.58$ & $14.45$ & $13.35$ & $9.67$ & $32.28$ & $20$ \\
& & & \textsc{fpc-g-maximin} &  $14.47$ & $18$ & $16.0$ & $4.95$ & $15.95$ & $30.06$ & $18$ \\
& & & \textsc{fpc-g-propeq} &  $14.17$ & $18$ & $18$ & $5.15$ & $15.83$ & $33.87$ & $18$ \\
\hline
\multirow{9}{*}{elevators08} & \multirow{9}{*}{$4.0\pm0.0$} & \multirow{9}{*}{$12.75\pm6.38$} & \textsc{lama} & $14.53$ & $3.5$ & $10.27$ & $1.1$ & $9.96$ & $26.17$ & $20$ \\
& & & \textsc{madagascar} &  $15.75$ & $14.0$ & $12.25$ & $17$ & $4.89$ & $48.76$ & $17$ \\
& & & \textsc{contract-net} &  $12.46$ & $9.33$ & $11.49$ & $5.5$ & $12.49$ & $24.05$ & $20$ \\
& & & \textsc{milp-g-maximin} &  $12.78$ & $9.17$ & $12.43$ & $5.86$ & $12.95$ & $22.35$ & $20$ \\
& & & \textsc{milp-g-propeq} &  $12.46$ & $10.0$ & $12.75$ & $6.94$ & $13.36$ & $22.28$ & $20$ \\
& & & \textsc{milp-w-maximin} &  $11.79$ & $10.83$ & $12.81$ & $7.92$ & $15.57$ & $22.28$ & $20$ \\
& & & \textsc{milp-w-propeq} &  $11.22$ & $8.33$ & $11.63$ & $5.83$ & $13.78$ & $22.28$ & $20$ \\
& & & \textsc{fpc-g-maximin} &  $8.64$ & $15.83$ & $15.49$ & $4.8$ & $12.5$ & $19.0$ & $19$ \\
& & & \textsc{fpc-g-propeq} &  $8.41$ & $15.83$ & $16.74$ & $4.87$ & $11.9$ & $19.0$ & $19$ \\
\hline
\multirow{9}{*}{logistics00} & \multirow{9}{*}{$5.05\pm1.47$} & \multirow{9}{*}{$10.3\pm3.28$} & \textsc{lama} & $19.87$ & $19.0$ & $18.32$ & $10.77$ & $14.33$ & $28.9$ & $20$ \\
& & & \textsc{madagascar} &  $12.62$ & $20$ & $19.07$ & $19.71$ & $11.27$ & $20$ & $20$ \\
& & & \textsc{contract-net} &  $18.95$ & $20$ & $20$ & $12.24$ & $14.95$ & $31.24$ & $20$ \\
& & & \textsc{milp-g-maximin} &  $19.3$ & $20$ & $19.55$ & $11.83$ & $15.51$ & $30.54$ & $20$ \\
& & & \textsc{milp-g-propeq} &  $18.37$ & $20$ & $19.55$ & $12.96$ & $15.12$ & $29.82$ & $20$ \\
& & & \textsc{milp-w-maximin} &  $19.01$ & $20$ & $19.3$ & $12.4$ & $14.46$ & $30.18$ & $20$ \\
& & & \textsc{milp-w-propeq} &  $18.05$ & $20$ & $19.17$ & $13.72$ & $14.76$ & $29.92$ & $20$ \\
& & & \textsc{fpc-g-maximin} &  $13.58$ & $14$ & $14$ & $4.75$ & $13.38$ & $21.43$ & $14$ \\
& & & \textsc{fpc-g-propeq} &  $13.51$ & $14$ & $14$ & $4.75$ & $13.28$ & $21.29$ & $14$ \\
\hline
\multirow{9}{*}{rovers} & \multirow{9}{*}{$6.6\pm2.44$} & \multirow{9}{*}{$13.85\pm4.45$} & \textsc{lama} & $19.86$ & $2.0$ & $5.17$ & $0.86$ & $6.51$ & $20.0$ & $20$ \\
& & & \textsc{madagascar} &  $11.33$ & $8.83$ & $8.8$ & $18.64$ & $9.75$ & $55.59$ & $20$ \\
& & & \textsc{contract-net} &  $18.1$ & $16.5$ & $12.42$ & $10.5$ & $12.17$ & $20.0$ & $20$ \\
& & & \textsc{milp-g-maximin} &  $18.37$ & $19.5$ & $12.52$ & $12.16$ & $11.53$ & $20.0$ & $20$ \\
& & & \textsc{milp-g-propeq} &  $18.21$ & $18.5$ & $18.92$ & $12.7$ & $14.98$ & $20.0$ & $20$ \\
& & & \textsc{milp-w-maximin} &  $17.01$ & $18.0$ & $18.58$ & $15.44$ & $15.78$ & $20.0$ & $20$ \\
& & & \textsc{milp-w-propeq} &  $17.31$ & $16.5$ & $17.25$ & $14.97$ & $15.36$ & $20.0$ & $20$ \\
& & & \textsc{fpc-g-maximin} &  $11.36$ & $10.17$ & $5.87$ & $3.81$ & $7.45$ & $12.0$ & $12$ \\
& & & \textsc{fpc-g-propeq} &  $11.06$ & $9.17$ & $8.52$ & $4.35$ & $8.46$ & $12.0$ & $12$ \\
\hline
\multirow{9}{*}{satellites} & \multirow{9}{*}{$5.5\pm1.73$} & \multirow{9}{*}{$24.4\pm18.09$} & \textsc{lama} & $19.94$ & $1.0$ & $3.0$ & $0.03$ & $2.96$ & $20.0$ & $20$ \\
& & & \textsc{madagascar} &  $10.24$ & $14.46$ & $10.91$ & $19.4$ & $11.02$ & $37.43$ & $20$ \\
& & & \textsc{contract-net} &  $14.23$ & $16.0$ & $15.67$ & $12.94$ & $10.81$ & $20.0$ & $20$ \\
& & & \textsc{milp-g-maximin} &  $12.84$ & $10.69$ & $8.64$ & $13.19$ & $6.2$ & $20.0$ & $20$ \\
& & & \textsc{milp-g-propeq} &  $13.41$ & $13.49$ & $14.16$ & $13.41$ & $8.37$ & $20.0$ & $20$ \\
& & & \textsc{milp-w-maximin} &  $13.46$ & $14.14$ & $11.52$ & $13.39$ & $8.65$ & $20.0$ & $20$ \\
& & & \textsc{milp-w-propeq} &  $13.41$ & $13.63$ & $14.02$ & $13.44$ & $9.1$ & $20.0$ & $20$ \\
& & & \textsc{fpc-g-maximin} &  $10.41$ & $10.79$ & $7.4$ & $5.57$ & $8.78$ & $13.0$ & $13$ \\
& & & \textsc{fpc-g-propeq} &  $9.99$ & $11.04$ & $11.16$ & $6.07$ & $10.11$ & $13.0$ & $13$ \\
\hline
\multirow{9}{*}{sokoban} & \multirow{9}{*}{$2.6\pm0.6$} & \multirow{9}{*}{$3.2\pm1.01$} & \textsc{lama} & $16.94$ & $9.0$ & $11.25$ & $8.0$ & $6.34$ & $38.59$ & $18$ \\
& & & \textsc{madagascar} &  $9.15$ & $8.0$ & $10.0$ & $14.65$ & $12.08$ & $48.08$ & $15$ \\
& & & \textsc{contract-net} &  $14.8$ & $10.0$ & $11.0$ & $8.91$ & $6.92$ & $59.34$ & $16$ \\
& & & \textsc{milp-g-maximin} &  $15.02$ & $14.0$ & $14.33$ & $7.99$ & $7.68$ & $58.88$ & $16$ \\
& & & \textsc{milp-g-propeq} &  $14.1$ & $13.0$ & $13.33$ & $7.58$ & $7.36$ & $58.0$ & $15$ \\
& & & \textsc{milp-w-maximin} &  $14.71$ & $14.5$ & $14.0$ & $9.94$ & $8.54$ & $54.12$ & $17$ \\
& & & \textsc{milp-w-propeq} &  $15.03$ & $15.0$ & $15.0$ & $9.48$ & $8.81$ & $56.35$ & $18$ \\
& & & \textsc{fpc-g-maximin} &  $16.38$ & $14.0$ & $14.33$ & $6.17$ & $6.38$ & $41.05$ & $17$ \\
& & & \textsc{fpc-g-propeq} &  $16.33$ & $14.0$ & $14.33$ & $6.24$ & $6.5$ & $33.69$ & $18$ \\
\hline
\multirow{9}{*}{zenotravel} & \multirow{9}{*}{$3.8\pm1.4$} & \multirow{9}{*}{$13.9\pm9.68$} & \textsc{lama} & $20.0$ & $0.5$ & $1.62$ & $0.26$ & $2.64$ & $36.25$ & $20$ \\
& & & \textsc{madagascar} &  $7.64$ & $7.83$ & $3.58$ & $12.12$ & $4.5$ & $27.67$ & $14$ \\
& & & \textsc{contract-net} &  $14.5$ & $16.58$ & $12.7$ & $15.14$ & $11.93$ & $28.13$ & $20$ \\
& & & \textsc{milp-g-maximin} &  $14.85$ & $17.0$ & $13.83$ & $14.55$ & $12.93$ & $28.12$ & $20$ \\
& & & \textsc{milp-g-propeq} &  $14.89$ & $16.67$ & $13.65$ & $15.1$ & $14.22$ & $28.13$ & $20$ \\
& & & \textsc{milp-w-maximin} &  $14.08$ & $13.58$ & $10.83$ & $15.21$ & $11.18$ & $28.98$ & $20$ \\
& & & \textsc{milp-w-propeq} &  $14.91$ & $15.83$ & $11.98$ & $14.94$ & $11.31$ & $29.05$ & $20$ \\
& & & \textsc{fpc-g-maximin} &  $15.17$ & $18.0$ & $15.37$ & $9.06$ & $11.16$ & $22.12$ & $19$ \\
& & & \textsc{fpc-g-propeq} &  $15.2$ & $17.67$ & $17.45$ & $9.15$ & $12.02$ & $33.68$ & $19$ \\
\hline
    \end{tabular}
    \caption{Score obtained by each approach on each of the metrics for all the problems.}
    \label{tab:fairness_table3}
\end{table*}

\begin{table*}[h]
\tiny
    \centering
    \begin{tabular}{|c|c|c|c|c|c|c|c|c|c|c|}
    \hline
\textbf{Domain} & $\mathbf{|N|}$ & $\mathbf{|G|}$ &\textbf{Approach} & \textbf{Plan Cost} & \textbf{Maximin} & \textbf{Propeq} & \textbf{Total Time} & \textbf{Coverage}\\ \hline
\multirow{9}{*}{blocksworld} & \multirow{9}{*}{$-\pm-$} & \multirow{9}{*}{$-\pm-$} & \textsc{lama} & $0$ & $0$ & $0$ & $0$ & $0$ & $0$ & $0$ \\
& & & \textsc{madagascar} &  $0$ & $0$ & $0$ & $0$ & $0$ & $0$ & $0$ \\
& & & \textsc{contract-net} &  $0$ & $0$ & $0$ & $0$ & $0$ & $0$ & $0$ \\
& & & \textsc{milp-g-maximin} &  $0$ & $0$ & $0$ & $0$ & $0$ & $0$ & $0$ \\
& & & \textsc{milp-g-propeq} &  $0$ & $0$ & $0$ & $0$ & $0$ & $0$ & $0$ \\
& & & \textsc{milp-w-maximin} &  $0$ & $0$ & $0$ & $0$ & $0$ & $0$ & $0$ \\
& & & \textsc{milp-w-propeq} &  $0$ & $0$ & $0$ & $0$ & $0$ & $0$ & $0$ \\
& & & \textsc{fpc-g-maximin} &  $0$ & $0$ & $0$ & $0$ & $0$ & $0$ & $0$ \\
& & & \textsc{fpc-g-propeq} &  $0$ & $0$ & $0$ & $0$ & $0$ & $0$ & $0$ \\
\hline
\multirow{9}{*}{depot} & \multirow{9}{*}{$5.33\pm1.0$} & \multirow{9}{*}{$6.44\pm3.28$} & \textsc{lama} & $9.0$ & $9$ & $7.33$ & $0.91$ & $5.1$ & $12.08$ & $9$ \\
& & & \textsc{madagascar} &  $5.46$ & $9$ & $7.33$ & $9$ & $7.93$ & $9$ & $9$ \\
& & & \textsc{contract-net} &  $7.09$ & $9$ & $7.42$ & $0.84$ & $4.28$ & $17.07$ & $9$ \\
& & & \textsc{milp-g-maximin} &  $7.61$ & $9$ & $7.68$ & $0.78$ & $4.5$ & $17.07$ & $9$ \\
& & & \textsc{milp-g-propeq} &  $6.93$ & $9$ & $8.75$ & $1.22$ & $4.47$ & $17.07$ & $9$ \\
& & & \textsc{milp-w-maximin} &  $7.61$ & $9$ & $7.68$ & $0.78$ & $4.5$ & $17.07$ & $9$ \\
& & & \textsc{milp-w-propeq} &  $7.09$ & $9$ & $8.42$ & $1.72$ & $4.49$ & $17.07$ & $9$ \\
& & & \textsc{fpc-g-maximin} &  $7.45$ & $9$ & $7.6$ & $2.57$ & $7.07$ & $11.38$ & $9$ \\
& & & \textsc{fpc-g-propeq} &  $7.32$ & $9$ & $9$ & $0.8$ & $5.95$ & $11.21$ & $9$ \\
\hline
\multirow{9}{*}{driverlog} & \multirow{9}{*}{$2.94\pm1.11$} & \multirow{9}{*}{$8.78\pm4.83$} & \textsc{lama} & $18.0$ & $3.0$ & $4.02$ & $1.39$ & $4.58$ & $32.07$ & $18$ \\
& & & \textsc{madagascar} &  $11.35$ & $9.75$ & $8.16$ & $16.28$ & $8.89$ & $23.84$ & $18$ \\
& & & \textsc{contract-net} &  $14.19$ & $12.0$ & $10.41$ & $10.16$ & $7.65$ & $31.14$ & $18$ \\
& & & \textsc{milp-g-maximin} &  $13.96$ & $16.83$ & $14.92$ & $12.08$ & $10.25$ & $30.39$ & $18$ \\
& & & \textsc{milp-g-propeq} &  $13.91$ & $16.67$ & $14.92$ & $11.67$ & $9.01$ & $30.42$ & $18$ \\
& & & \textsc{milp-w-maximin} &  $13.75$ & $16.5$ & $14.67$ & $12.51$ & $9.46$ & $30.22$ & $18$ \\
& & & \textsc{milp-w-propeq} &  $13.9$ & $15.83$ & $13.28$ & $11.82$ & $8.97$ & $30.28$ & $18$ \\
& & & \textsc{fpc-g-maximin} &  $14.47$ & $18$ & $16.0$ & $4.95$ & $15.95$ & $30.06$ & $18$ \\
& & & \textsc{fpc-g-propeq} &  $14.17$ & $18$ & $18$ & $5.15$ & $15.83$ & $33.87$ & $18$ \\
\hline
\multirow{9}{*}{elevators08} & \multirow{9}{*}{$4.0\pm0.0$} & \multirow{9}{*}{$10.76\pm4.47$} & \textsc{lama} & $11.77$ & $2.5$ & $7.61$ & $0.24$ & $7.91$ & $23.17$ & $17$ \\
& & & \textsc{madagascar} &  $15.75$ & $14.0$ & $12.25$ & $17$ & $4.89$ & $48.76$ & $17$ \\
& & & \textsc{contract-net} &  $9.5$ & $6.33$ & $8.49$ & $3.24$ & $9.76$ & $21.05$ & $17$ \\
& & & \textsc{milp-g-maximin} &  $9.92$ & $6.17$ & $9.43$ & $3.32$ & $9.98$ & $19.35$ & $17$ \\
& & & \textsc{milp-g-propeq} &  $9.72$ & $7.0$ & $9.75$ & $3.94$ & $10.42$ & $19.28$ & $17$ \\
& & & \textsc{milp-w-maximin} &  $9.1$ & $7.83$ & $9.91$ & $5.72$ & $13.07$ & $19.28$ & $17$ \\
& & & \textsc{milp-w-propeq} &  $8.41$ & $6.83$ & $8.97$ & $4.02$ & $10.42$ & $19.28$ & $17$ \\
& & & \textsc{fpc-g-maximin} &  $10.17$ & $14.83$ & $13.58$ & $3.83$ & $10.86$ & $17.0$ & $17$ \\
& & & \textsc{fpc-g-propeq} &  $9.95$ & $14.83$ & $14.83$ & $3.91$ & $10.26$ & $17.0$ & $17$ \\
\hline
\multirow{9}{*}{logistics00} & \multirow{9}{*}{$4.21\pm0.8$} & \multirow{9}{*}{$8.71\pm2.52$} & \textsc{lama} & $13.98$ & $14$ & $14$ & $7.9$ & $9.72$ & $22.9$ & $14$ \\
& & & \textsc{madagascar} &  $8.87$ & $14$ & $14$ & $14$ & $7.13$ & $14$ & $14$ \\
& & & \textsc{contract-net} &  $13.36$ & $14$ & $14$ & $8.51$ & $9.49$ & $25.24$ & $14$ \\
& & & \textsc{milp-g-maximin} &  $13.55$ & $14$ & $14$ & $8.65$ & $10.85$ & $24.54$ & $14$ \\
& & & \textsc{milp-g-propeq} &  $12.85$ & $14$ & $14$ & $9.32$ & $10.04$ & $23.82$ & $14$ \\
& & & \textsc{milp-w-maximin} &  $13.22$ & $14$ & $14$ & $9.03$ & $9.87$ & $24.18$ & $14$ \\
& & & \textsc{milp-w-propeq} &  $12.58$ & $14$ & $14$ & $9.61$ & $9.6$ & $23.92$ & $14$ \\
& & & \textsc{fpc-g-maximin} &  $13.58$ & $14$ & $14$ & $4.75$ & $13.38$ & $21.43$ & $14$ \\
& & & \textsc{fpc-g-propeq} &  $13.51$ & $14$ & $14$ & $4.75$ & $13.28$ & $21.29$ & $14$ \\
\hline
\multirow{9}{*}{rovers} & \multirow{9}{*}{$4.83\pm1.03$} & \multirow{9}{*}{$11.5\pm3.18$} & \textsc{lama} & $11.86$ & $2.0$ & $3.55$ & $0.86$ & $3.7$ & $12.0$ & $12$ \\
& & & \textsc{madagascar} &  $6.98$ & $6.33$ & $5.85$ & $11.58$ & $5.67$ & $18.29$ & $12$ \\
& & & \textsc{contract-net} &  $11.14$ & $10.5$ & $7.75$ & $6.4$ & $7.12$ & $12.0$ & $12$ \\
& & & \textsc{milp-g-maximin} &  $11.14$ & $12$ & $8.48$ & $7.49$ & $7.08$ & $12.0$ & $12$ \\
& & & \textsc{milp-g-propeq} &  $11.17$ & $11.5$ & $11.67$ & $7.6$ & $8.75$ & $12.0$ & $12$ \\
& & & \textsc{milp-w-maximin} &  $10.38$ & $10.0$ & $10.58$ & $8.85$ & $8.82$ & $12.0$ & $12$ \\
& & & \textsc{milp-w-propeq} &  $10.57$ & $12$ & $12$ & $8.76$ & $9.09$ & $12.0$ & $12$ \\
& & & \textsc{fpc-g-maximin} &  $11.36$ & $10.17$ & $5.87$ & $3.81$ & $7.45$ & $12.0$ & $12$ \\
& & & \textsc{fpc-g-propeq} &  $11.06$ & $9.17$ & $8.52$ & $4.35$ & $8.46$ & $12.0$ & $12$ \\
\hline
\multirow{9}{*}{satellites} & \multirow{9}{*}{$5.0\pm1.31$} & \multirow{9}{*}{$22.27\pm19.3$} & \textsc{lama} & $12.94$ & $1.0$ & $2.17$ & $0.03$ & $2.01$ & $13.0$ & $13$ \\
& & & \textsc{madagascar} &  $6.74$ & $9.4$ & $6.93$ & $13$ & $6.34$ & $13$ & $13$ \\
& & & \textsc{contract-net} &  $9.75$ & $12.0$ & $11.24$ & $8.39$ & $7.03$ & $13.0$ & $13$ \\
& & & \textsc{milp-g-maximin} &  $9.46$ & $10.29$ & $7.6$ & $8.1$ & $4.39$ & $13.0$ & $13$ \\
& & & \textsc{milp-g-propeq} &  $9.54$ & $11.29$ & $11.21$ & $8.29$ & $4.85$ & $13.0$ & $13$ \\
& & & \textsc{milp-w-maximin} &  $9.46$ & $11.14$ & $8.67$ & $8.51$ & $5.58$ & $13.0$ & $13$ \\
& & & \textsc{milp-w-propeq} &  $9.46$ & $11.43$ & $11.23$ & $8.36$ & $5.71$ & $13.0$ & $13$ \\
& & & \textsc{fpc-g-maximin} &  $10.41$ & $10.79$ & $7.4$ & $5.57$ & $8.78$ & $13.0$ & $13$ \\
& & & \textsc{fpc-g-propeq} &  $9.99$ & $11.04$ & $11.16$ & $6.07$ & $10.11$ & $13.0$ & $13$ \\
\hline
\multirow{9}{*}{sokoban} & \multirow{9}{*}{$2.38\pm0.51$} & \multirow{9}{*}{$3.23\pm0.73$} & \textsc{lama} & $12.98$ & $6.0$ & $7.58$ & $4.69$ & $4.16$ & $31.07$ & $13$ \\
& & & \textsc{madagascar} &  $7.94$ & $8.0$ & $9.17$ & $12.65$ & $10.08$ & $40.9$ & $13$ \\
& & & \textsc{contract-net} &  $12.02$ & $7.0$ & $8.33$ & $6.6$ & $5.81$ & $53.77$ & $13$ \\
& & & \textsc{milp-g-maximin} &  $12.27$ & $11.0$ & $11.67$ & $5.58$ & $5.72$ & $53.1$ & $13$ \\
& & & \textsc{milp-g-propeq} &  $12.27$ & $11.0$ & $11.67$ & $5.58$ & $5.72$ & $53.19$ & $13$ \\
& & & \textsc{milp-w-maximin} &  $11.01$ & $10.5$ & $10.33$ & $7.31$ & $6.34$ & $47.29$ & $13$ \\
& & & \textsc{milp-w-propeq} &  $11.39$ & $11.0$ & $11.33$ & $6.85$ & $6.66$ & $49.52$ & $13$ \\
& & & \textsc{fpc-g-maximin} &  $12.65$ & $11.0$ & $11.0$ & $3.43$ & $4.76$ & $35.16$ & $13$ \\
& & & \textsc{fpc-g-propeq} &  $12.65$ & $11.0$ & $11.0$ & $3.58$ & $4.72$ & $27.82$ & $13$ \\
\hline
\multirow{9}{*}{zenotravel} & \multirow{9}{*}{$3.21\pm1.25$} & \multirow{9}{*}{$8.43\pm4.8$} & \textsc{lama} & $14.0$ & $0.5$ & $1.16$ & $0.21$ & $1.73$ & $30.25$ & $14$ \\
& & & \textsc{madagascar} &  $7.64$ & $7.83$ & $3.58$ & $12.12$ & $4.5$ & $27.67$ & $14$ \\
& & & \textsc{contract-net} &  $10.5$ & $12.67$ & $10.5$ & $9.88$ & $8.6$ & $22.13$ & $14$ \\
& & & \textsc{milp-g-maximin} &  $10.76$ & $11.83$ & $10.5$ & $9.05$ & $8.3$ & $22.12$ & $14$ \\
& & & \textsc{milp-g-propeq} &  $10.77$ & $11.67$ & $11.08$ & $9.66$ & $9.89$ & $22.13$ & $14$ \\
& & & \textsc{milp-w-maximin} &  $10.16$ & $10.33$ & $8.33$ & $9.96$ & $7.96$ & $22.98$ & $14$ \\
& & & \textsc{milp-w-propeq} &  $10.83$ & $12.5$ & $10.58$ & $9.42$ & $7.29$ & $23.05$ & $14$ \\
& & & \textsc{fpc-g-maximin} &  $11.57$ & $13.5$ & $12.33$ & $6.15$ & $9.2$ & $17.12$ & $14$ \\
& & & \textsc{fpc-g-propeq} &  $11.77$ & $13.17$ & $13.25$ & $6.28$ & $10.33$ & $28.68$ & $14$ \\
\hline
    \end{tabular}
    \caption{Score obtained by each approach on each of the metrics for the commonly solved problems.}
    \label{tab:fairness_table3}
\end{table*}